\documentclass[sigconf]{acmart}
\usepackage{amsmath}
\usepackage{amsthm}
\usepackage{booktabs}
\usepackage{graphicx}
\usepackage{multirow}
\usepackage{microtype}
\usepackage[table]{xcolor}
\usepackage{colortbl} 
\usepackage{tabularx}
\newcolumntype{Y}{>{\centering\arraybackslash}X}
\usepackage{bm}
\usepackage{pifont}
\usepackage{xcolor}
\definecolor{darkblue}{RGB}{0, 82, 155}
\usepackage{algorithm}
\usepackage{algpseudocode}
\makeatletter
\patchcmd{\@mkbibcitation}{\ref{TotPages}}{10}{}{
  \PackageError{TMUR}{Cannot preserve published page count}{}
}
\patchcmd{\@mkbibcitation}{\ref{TotPages}}{10}{}{
  \PackageError{TMUR}{Cannot preserve published page count}{}
}
\makeatother
\AtBeginDocument{%
  }

\copyrightyear{2026}
\acmYear{2026}
\setcopyright{cc}
\setcctype{by}
\acmConference[MM '26]{Proceedings of the 34th ACM International Conference on Multimedia}{November 10--14, 2026}{Rio de Janeiro, Brazil}
\acmBooktitle{Proceedings of the 34th ACM International Conference on Multimedia (MM '26), November 10--14, 2026, Rio de Janeiro, Brazil}
\acmDOI{10.1145/3767308.3836405}
\acmISBN{979-8-4007-2213-4/2026/11}

\begin{document}

\title{Are Independently Estimated View Uncertainties Comparable? Unified Routing for Trusted Multi-View Classification}
\renewcommand{\shorttitle}{Unified Routing for Trusted Multi-View Classification}

\author{Yilin Zhang}
\affiliation{%
  \institution{Xidian University}
  \city{Xi'an}
  \country{China}
}
\email{ylzhang\_3@stu.xidian.edu.cn}

\author{Cai Xu}
\affiliation{%
  \institution{Xidian University}
  \city{Xi'an}
  \country{China}
}
\email{cxu@xidian.edu.cn}

\author{Haishun Chen}
\affiliation{%
  \institution{Xidian University}
  \city{Xi'an}
  \country{China}
}
\email{chenhaishun@stu.xidian.edu.cn}

\author{Ziyu Guan}
\affiliation{%
  \institution{Xidian University}
  \city{Xi'an}
  \country{China}
}
\email{zyguan@xidian.edu.cn}

\author{Wei Zhao}
\authornote{Corresponding author.}
\affiliation{%
  \institution{Xidian University}
  \city{Xi'an}
  \country{China}
}
\email{ywzhao@mail.xidian.edu.cn}

\renewcommand{\shortauthors}{Yilin Zhang, Cai Xu, Haishun Chen, Ziyu Guan, \& Wei Zhao}

\begin{abstract}
Trusted multi-view classification typically relies on a view-wise evidential fusion process: each view independently produces class evidence and uncertainty, and the final prediction is obtained by aggregating these independent opinions. While this design is modular and uncertainty-aware, it implicitly assumes that evidence from different views is numerically comparable. In practice, however, this assumption is fragile. Different views often differ in feature space, noise level, and semantic granularity, while independently trained branches are optimized only for prediction correctness, without any constraint enforcing cross-view consistency in evidence strength. As a result, the uncertainty used for fusion can be dominated by branch-specific scale bias rather than true sample-level reliability. To address this issue, we propose \textbf{\underline{T}rusted \underline{M}ulti-view learning with \underline{U}nified \underline{R}outing (TMUR)}, which decouples view-specific evidence extraction from fusion arbitration. TMUR uses view-private experts and one collaborative expert, and employs a unified router that observes the global multi-view context to generate sample-level expert weights. Soft load-balancing and diversity regularization further encourage balanced expert utilization and more discriminative expert specialization. 
We also provide theoretical analysis showing why independent evidential supervision does not identify a common cross-view evidence scale.
Extensive experiments on 14 datasets and comparisons with 15 recent baselines demonstrate that TMUR consistently improves both classification performance and reliability. 
Code is available at \href{https://github.com/YilinZhang107/TMUR}{\textcolor{darkblue}{\textit{here}}}.
\end{abstract}

\begin{CCSXML}
<ccs2012>
<concept>
<concept_id>10010147.10010257.10010258.10010259.10010263</concept_id>
<concept_desc>Computing methodologies~Supervised learning by classification</concept_desc>
<concept_significance>500</concept_significance>
</concept>
<concept>
<concept_id>10010147.10010257</concept_id>
<concept_desc>Computing methodologies~Machine learning</concept_desc>
<concept_significance>500</concept_significance>
</concept>
</ccs2012>
\end{CCSXML}

\ccsdesc[500]{Computing methodologies~Machine learning}
\ccsdesc[500]{Computing methodologies~Supervised learning by classification}

\keywords{Trusted Multi-view Classification, Uncertainty-aware Deep Learning, Evidential Deep Learning.}


\maketitle

\section{Introduction}
Multi-view classification~\cite{lan2025BCM,liu2025enhancingTCSVT} aims to improve prediction by integrating complementary information from multiple views. In many real-world scenarios, different views may exhibit distinct noise levels~\cite{wang2023uncertainty}, semantic granularity, or even conflicting clues~\cite{hu2026robust} for the same sample, causing their quality to vary dynamically across samples. As a result, a reliable multi-view model should not only make correct predictions, but also be able to dynamically assess which views are more trusted for the current sample.

\begin{figure*}[!ht]
\centering
\includegraphics[width=\textwidth]{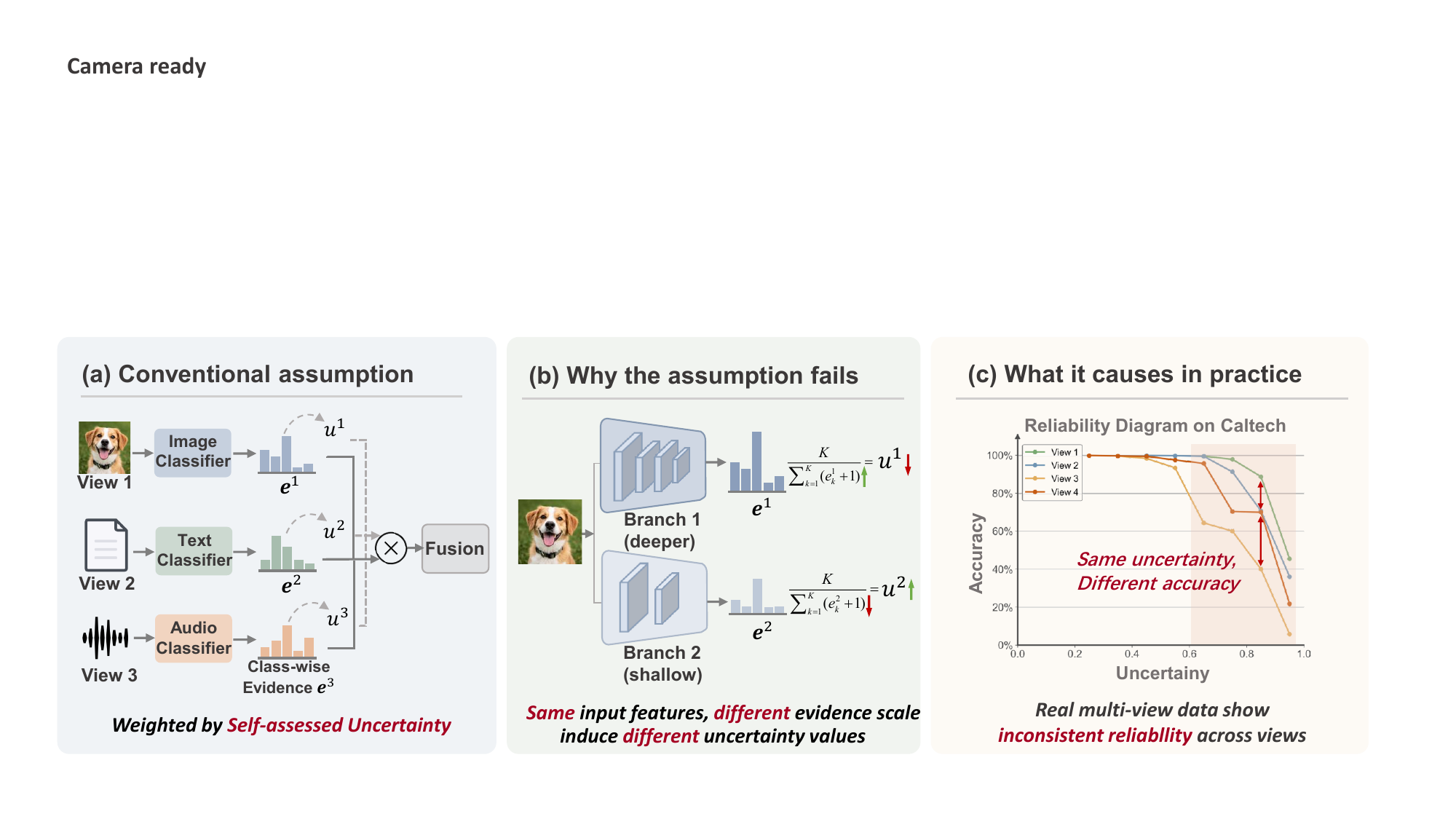}
\caption{(a) Conventional trusted multi-view fusion uses branch-local self-assessed uncertainty for weighting. 
(b) However, heterogeneous branches can produce different evidence scales, so even the same input may yield different uncertainty values. 
(c) On Caltech dataset, different views show inconsistent reliability under the same uncertainty level. 
This suggests that branch-local uncertainty is not directly comparable across views and may be unreliable for cross-view arbitration.}
\Description{Motivation of this work. 
Traditional trusted multi-view fusion uses each branch's self-assessed uncertainty to weight view opinions, implicitly assuming that uncertainty is directly comparable across views. However, this assumption is fragile. Since evidential uncertainty is determined by evidence magnitude, heterogeneous branches can produce different evidence scales even for the same input, yielding inconsistent uncertainty values. Moreover, reliability diagrams on Caltech show that different views can exhibit markedly different accuracies at the same uncertainty level. These observations indicate that branch-local uncertainty is not a reliable cross-view arbitration signal, motivating a unified routing mechanism instead of self-weighted late fusion.}
\label{fig:motivation}
\end{figure*}

To this end, recent trusted multi-view learning methods~\cite{han2023dynamic,xu2024reliable,zhang2026NoisySup,liu2025enhancingAAAI} are largely built on evidential learning~\cite{sensoy2018evidential,deng2023uncertainty} and subjective logic~\cite{josang2018subjective,shafer1976mathematical}. A typical paradigm is shown in Fig.~\ref{fig:motivation}(a), which lets each view-specific branch produce non-negative class evidence, convert it into an uncertainty-aware opinion, and then fuse these opinions according to their branch-wise uncertainty. This line of work is attractive because it is modular, compatible with heterogeneous encoders, and naturally provides an interpretable fusion process through uncertainty-aware weighting. However, it also relies on an assumption that is rarely questioned explicitly: \emph{the evidence and self-assessed uncertainty from different views are treated as directly comparable across branches.}

We find that this assumption is in fact fragile, and it arises from two sources: (1) \textbf{Data-level heterogeneity}. Different views often reside in different feature spaces and exhibit different signal-to-noise ratios~\cite{zhang2024AdapRegress,zhu2026online}, statistical complexity, and levels of semantic abstraction, which naturally induce different evidence distributions. (2) \textbf{Model-level heterogeneity}. Each branch is trained primarily to improve its own classification objective, typically by increasing the evidence assigned to the ground-truth class and suppressing other classes. Such supervision encourages per-branch correctness, but places no explicit constraint on the absolute strength of evidence across views~\cite{xu2025beyond,liu2026beyond}. Moreover, different architectural choices can further amplify this mismatch. 
Therefore, features of the same quality may still yield evidence with significantly different strengths. 
A toy experiment, as shown in Fig.~\ref{fig:motivation}(b), demonstrates that for the same input features, simply changing the depth of one branch can lead to significant differences in evidence strength and induce different uncertainty values.

This issue becomes particularly critical in trusted multi-view fusion, because subjective-logic uncertainty is directly tied to total evidence strength. When the uncertainty used for fusion weighting is dominated by branch-dependent scale rather than true sample-wise reliability, it is meaningful only within each branch and loses fairness as a cross-view arbitration signal, as shown in Fig.~\ref{fig:motivation}(c). Therefore, the challenge is not merely how to estimate uncertainty within each branch, but how to perform \emph{fair cross-view arbitration when evidence scales are not directly comparable}.

This observation calls for a rethinking of the design principle of trusted multi-view fusion. If raw cross-view evidence cannot be compared reliably in magnitude, then fusion weights should no longer be determined solely by each branch's own uncertainty. Instead, they should be assigned by an independent arbitration module that has access to the global multi-view context. Motivated by this, we propose \textbf{\underline{T}rusted \underline{M}ulti-view learning with \underline{U}nified \underline{R}outing (TMUR)}, a framework that explicitly decouples view-specific evidence extraction from fusion arbitration. Specifically, TMUR equips each view, together with one global view, with its own expert network, and employs a unified router that observes the global multi-view context to generate sample-wise weights over these experts. In this way, fusion authority is no longer inferred only from branch-local uncertainty, but is instead determined through global, sample-aware routing. This design preserves the trusted nature of evidential prediction while mitigating the fusion bias induced by raw cross-view evidence-scale mismatch.

We also provide a theoretical analysis explaining why independent evidence supervision cannot determine a universal evidence scale across views, and when view reliability depends on the sample, global dynamic routing is better grounded than branch-local self-weighting. In summary, this work makes the following contributions:
\begin{itemize}
    \item We identify \textbf{cross-view evidence-scale incomparability} as an important yet underexplored bottleneck in trusted multi-view classification, and trace it to both data-level and model-level heterogeneity.
    \item We propose \textbf{TMUR}, a unified-routing framework that decouples evidential prediction from fusion arbitration and reduces scale-induced bias in cross-view fusion.
    \item We provide theoretical analysis clarifying why independent evidential supervision cannot guarantee cross-view evidence comparability, and why dynamic global routing is preferable when view reliability is sample-dependent.
    \item We validate TMUR on extensive multi-view benchmarks, reducing average ECE by 8.27 points and achieving consistently strong classification performance.   
\end{itemize}

\section{Related Work}
\subsection{Multi-View Learning}
Multi-view learning aims to exploit complementary yet heterogeneous views for stronger prediction. 

Recent work improves multi-view reasoning through stronger cross-view alignment~\cite{lin2025enhance,li2023crossview,yuan2025prototype}, trust-aware pairwise modeling~\cite{qin2026multi}, uncertainty-refined representation learning~\cite{hu2025self}, and agent-based cluster decision making~\cite{xue2026grok}.
In the classification setting, trusted multi-view methods such as TMC \cite{han2021trusted} and its dynamic evidential extension \cite{han2023dynamic} formulate the problem through view-wise evidence extraction and decision-level fusion. Subsequent studies further enhance this line by modeling inter-view conflict \cite{xu2024reliable}, leveraging trust-aware learning under conflicting views \cite{lu2025navigating}, introducing fuzzy evidence for reliable classification \cite{duan2025fuml}, addressing ambiguity and debiasing in open-set settings \cite{fang2025enhancing}, and incorporating expert knowledge constraints \cite{liang2025expert}. These efforts clearly demonstrate the importance of reliability-aware multi-view learning. Our work follows this trusted multi-view learning line, but focuses on a different bottleneck: whether the evidence produced by different branches is numerically comparable enough to support direct fusion.

\subsection{Evidential Fusion and Subjective Logic}
Evidential Deep Learning maps network outputs to Dirichlet parameters and provides a natural interface for predictive uncertainty \cite{sensoy2018evidential}, with later extensions modeling richer forms of evidential uncertainty \cite{li2024hyper}. In multi-view settings, these outputs are often combined through subjective logic or Dempster--Shafer style fusion \cite{shafer1976mathematical,josang2018subjective}. A related line estimates branch quality from uncertainty, total evidence, calibration, or belief discrepancy, and then reweights the resulting opinions accordingly \cite{han2023dynamic,cao2024predictive,fang2025enhancing,guo2017calibration,shi2026not}. However, these signals are still generated independently by each branch, so their magnitudes can remain entangled with branch-specific data and model biases rather than true sample-level reliability. In other words, reweighting branch-local opinions by branch-local quality estimates does not by itself make cross-view evidence numerically comparable. Classical evidence theory has long warned that fusion behavior becomes unreliable when the compatibility assumptions behind the combination rule are not justified \cite{zadeh1986simple}. Our method is motivated by this gap: instead of continuing to infer fusion authority directly from branch-local evidence quality, we delegate arbitration to a separate global router.

\subsection{Dynamic Routing and Expert-Based Arbitration}
Sample-dependent fusion has a long history in early expert-routing models \cite{jacobs1991adaptive} and modern routing architectures \cite{shazeer2017outrageously}. In multimodal learning, dynamic fusion has been used to adapt fusion weights to changing modality quality or sample context \cite{zhang2023provable,wu2025mvcge}, and predictive fusion methods also exploit sample-wise uncertainty signals to guide fusion decisions \cite{cao2024predictive}. At the same time, recent theory shows that dynamic fusion is not automatically robust: poorly designed weighting rules can even exacerbate modality greediness \cite{ding2025theoretical}. This makes the arbitration mechanism itself a first-class design issue. Recent routing-based expert models further show that expert specialization is useful across diverse multi-view or multimodal applications, including multimodal interaction modeling \cite{I2MoE}, molecular property prediction \cite{shirasuna2024moe_molecular}, time-series forecasting \cite{Huang2026M2FMoE}, and multi-view clustering \cite{zhang2025mixture}. Our use of routing is different. We introduce a unified router not as a generic capacity-expansion module, but as a dedicated arbitration mechanism that reduces the dependence of fusion authority on direct comparison of raw view-wise evidence.

\begin{figure*}[!ht]
\centering
\includegraphics[width=0.95\textwidth]{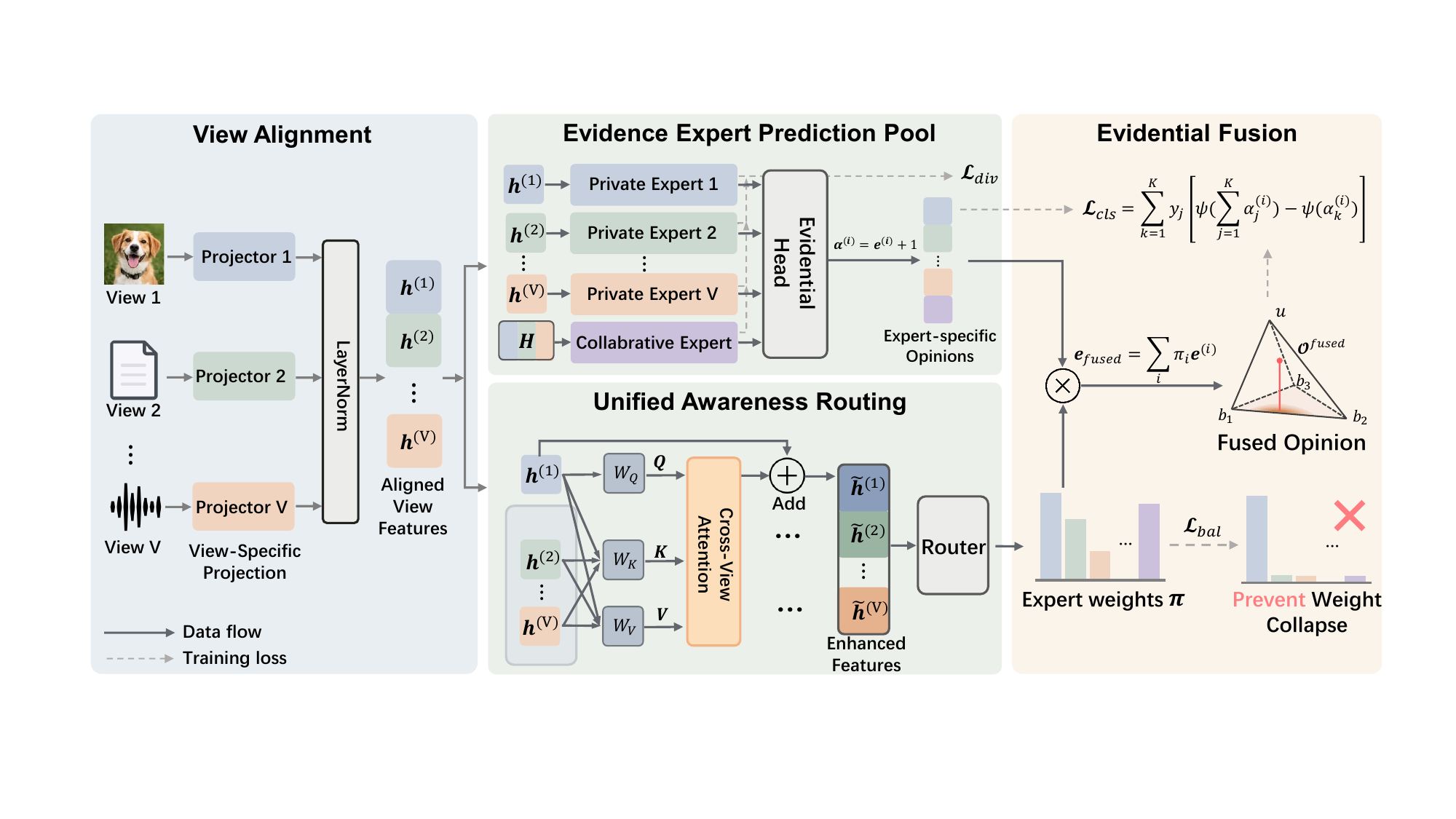}
\caption{Overview of TMUR. Aligned per-view features are sent to view-private experts for view-specific evidence extraction, while their concatenation is processed by a collaborative expert for joint multi-view evidence. In parallel, a unified router performs router-side cross-view interaction and predicts sample-wise expert weights from the global multi-view context. The final prediction is obtained by weighted evidence aggregation, so fusion authority is assigned by the router rather than by directly comparing branch-local evidence or uncertainty.}
\Description{Framework of TMUR. Aligned view features are fed to view-private experts and a collaborative expert for evidence extraction, while a separate unified router observes the multi-view context, computes expert weights, and controls the final weighted evidence fusion.}
\label{fig:framework}
\end{figure*}

\section{Method}
We propose a unified-routing framework for trusted multi-view classification. 
The core principle is to decouple \emph{view-wise evidence extraction} from \emph{multi-view fusion process}. 
Instead of letting each view-specific branch explicitly determine its own fusion authority through its local evidence magnitude or uncertainty, we assign this role to a specialized router that observes the joint multi-view context. 
In this way, the model retains the advantages of trusted multi-view learning, dynamic fusion and uncertainty awareness, while avoiding direct reliance on the comparability of raw cross-view evidence strengths.

\subsection{Problem Definition}
We consider a $V$-view classification problem with $K$ classes. 
The training set is denoted by:
\begin{equation}
\mathcal{D}=\{(\bm{x}_n^{(1)},\bm{x}_n^{(2)},\ldots,\bm{x}_n^{(V)},y_n)\}_{n=1}^{N},
\end{equation}
where $\bm{x}_n^{(v)} \in \mathbb{R}^{d_v}$ is the feature of the $v$-th view of the $n$-th sample, and $y_n \in \{1,\ldots,K\}$ is its class label. 
For simplicity, we omit the sample index $n$ below when no confusion arises.

In trusted multi-view learning~\cite{dong2025trusted}, each branch is expected to produce both class evidence and an uncertainty estimate. Following evidential learning, a predictor outputs a non-negative evidence vector $\bm e \in \mathbb{R}_{\ge 0}^{K}$, from which the Dirichlet parameters are defined as $\bm \alpha = \bm e + \bm 1.$
Let Dirichlet strength $S=\sum_{j=1}^{K}\alpha_j$. The corresponding predictive probability and subjective-logic uncertainty are
\begin{equation}
p_k = \frac{\alpha_k}{S},
\qquad
u = \frac{K}{S}.
\end{equation}
Equivalently, the associated opinion can be written as $\mathcal{O}=(\bm b,u)$ with belief masses
\begin{equation}
b_k = \frac{e_k}{S}, \qquad k=1,\ldots,K.
\end{equation}
This formulation makes the uncertainty directly depend on the total evidence magnitude. As a result, if different views produce evidence on different numerical scales, then their induced uncertainties need not be directly comparable across views, even when their semantic support patterns are similar.


\subsection{Unified Routing and Evidential Fusion}


To address this issue, \textbf{TMUR} explicitly separates two roles that are often entangled in existing trusted multi-view fusion methods: view-wise evidence extraction and cross-view fusion arbitration. The first role is to let each view-specific branch produce trusted evidence while preserving its own semantics. The second role is to decide, under the joint multi-view context of the current sample, how much each expert should contribute to the final decision. Based on this separation, TMUR maintains an expert pool composed of $V$ private experts and one collaborative expert, and introduces a unified router that assigns sample-wise fusion weights from the joint multi-view context. The overall architecture is shown in Fig.~\ref{fig:framework}.

\subsubsection{\textbf{Router and Experts Settings}}
Specifically, we first map all views into a dimension-aligned space:
\begin{equation}
\bm h^{(v)} = \mathrm{LN}(P_v(\bm x^{(v)})), \qquad v=1,\ldots,V,
\end{equation}
where $P_v(\cdot)$ is a view-specific projection layer and $\mathrm{LN}(\cdot)$ denotes LayerNorm. This step improves cross-view feature compatibility while retaining view-specific information. We then concatenate the aligned view features as:
\begin{equation}
\bm H = [\bm h^{(1)};\bm h^{(2)};\cdots;\bm h^{(V)}].
\end{equation}

The expert pool contains $V$ private experts and one collaborative expert. Each expert is implemented as a MLP. The $v$-th private expert receives only the aligned feature of the $v$-th view,
\begin{equation}
\bm z^{(v)} = E_v(\bm h^{(v)}), \qquad v=1,\ldots,V,
\end{equation}
while the collaborative expert receives the concatenated multi-view representation,
\begin{equation}
\bm z^{(V+1)} = E_{V+1}(\bm H).
\end{equation}
This design allows the private experts to express view-specific evidence, while the collaborative expert may capture complementary cues that emerge only after combining multiple views.

To provide the router with richer cross-view context, we further construct interaction-enhanced features through cross-view attention:
\begin{equation}
\tilde{\bm h}^{(v)} = \mathrm{Attn}\!\left(\bm h^{(v)}, \{\bm h^{(j)}\}_{j=1}^{V}, \{\bm h^{(j)}\}_{j=1}^{V}\right),
\end{equation}
where the query is the aligned feature of the current view and the keys and values are the aligned features from all views. We then concatenate the enhanced features as:
\begin{equation}
\bm g = [\tilde{\bm h}^{(1)};\tilde{\bm h}^{(2)};\cdots;\tilde{\bm h}^{(V)}].
\end{equation}
These enhanced features are used to form the routing context, whereas the private experts still operate on their own aligned view features. In this way, cross-view interaction affects how experts are weighted, rather than replacing the view-specific evidence produced by the private experts themselves.

Each expert outputs a non-negative evidence vector through an evidential classification head:
\begin{equation}
\bm e^{(i)} = \phi(\bm z^{(i)}), \qquad i \in \{1,\ldots,V+1\},
\end{equation}
where $\phi(\cdot)$ is a non-negative activation function. We use a \texttt{softplus}-based implementation in practice. The corresponding Dirichlet parameters are $\bm \alpha^{(i)} = \bm e^{(i)} + \bm 1$. Let $S^{(i)} = \sum_{k=1}^{K}\alpha_k^{(i)}$. Then the opinion associated with expert $i$ is denoted by
\begin{equation}
\mathcal{O}^{(i)} = (\bm b^{(i)}, u^{(i)}).
\end{equation}

\subsubsection{\textbf{Router-Driven Evidential Fusion}}
If raw evidential magnitude is not a reliable cross-view arbitration signal, then fusion weights should not be inferred directly from branch-local evidence or uncertainty alone. Instead, they should be assigned by a dedicated routing mechanism that can observe all views jointly. Based on the enhanced multi-view context $\bm g$, we compute the sample-wise expert weights as:
\begin{equation}
\bm \pi = \mathrm{softmax}\!\left(\frac{R(\bm g)}{\tau}\right)
\in \mathbb{R}^{V+1},
\end{equation}
where $R(\cdot)$ denotes the routing network and $\tau$ is the routing temperature.

The final fused evidence is obtained by weighted aggregation:
\begin{equation}
\bm e^{\mathrm{fused}}
=
\sum_{i=1}^{V+1} \pi_i \bm e^{(i)}.
\end{equation}
The corresponding fused opinion is denoted by
\begin{equation}
\mathcal{O}^{\mathrm{fused}} = (\bm b^{\mathrm{fused}}, u^{\mathrm{fused}}),
\end{equation}
where
\begin{equation}
b_k^{\mathrm{fused}} = \frac{e_k^{\mathrm{fused}}}{S^{\mathrm{fused}}},
\qquad
u^{\mathrm{fused}} = \frac{K}{S^{\mathrm{fused}}}.
\end{equation}
The predictive class probabilities can still be obtained from the fused Dirichlet mean:
\begin{equation}
p_k^{\mathrm{fused}} = \frac{\alpha_k^{\mathrm{fused}}}{S^{\mathrm{fused}}}.
\end{equation}

This fusion mechanism preserves the evidential output form while reassigning the arbitration role. Existing trusted multi-view fusion methods often let each branch both produce evidence and implicitly influence its own fusion weight through branch-local uncertainty. In contrast, TMUR keeps evidence generation inside the experts and lets a unified router assign fusion authority from the joint multi-view context. Consequently, the final weights are guided primarily by sample-level cross-view context, which reduces their dependence on branch-specific evidential scale.

\subsection{Training Objective}
Our training objective is composed of several complementary terms that respectively supervise the fused prediction, maintain expert discriminability, and regularize the routing behavior.

\paragraph{Fused evidential classification loss.}
We take the fused output as the primary supervision target and optimize it with the evidential digamma loss~\cite{chen2025revisiting}:
\begin{equation}
\mathcal{L}_{\mathrm{fused}}
=
\sum_{k=1}^{K}
y_k
\left[
\psi\!\left(\sum_{j=1}^{K}\alpha_j^{\mathrm{fused}}\right)
-
\psi\!\left(\alpha_k^{\mathrm{fused}}\right)
\right],
\end{equation}
where $\psi(\cdot)$ is the digamma function and $\bm y$ is the one-hot label vector.

\paragraph{Auxiliary expert supervision.}
Although the final prediction is produced after fusion, each expert should still learn to generate meaningful class evidence. We therefore apply the same evidential supervision to every expert:
\begin{equation}
\mathcal{L}_{\mathrm{view}}
=
\frac{1}{V+1}
\sum_{i=1}^{V+1}
\sum_{k=1}^{K}
y_k
\left[
\psi\!\left(\sum_{j=1}^{K}\alpha_j^{(i)}\right)
-
\psi\!\left(\alpha_k^{(i)}\right)
\right].
\end{equation}
This auxiliary supervision stabilizes expert learning and prevents the router from compensating for poorly trained experts.

\paragraph{Load-balancing regularization.}
Since multi-view data can contain persistently strong and weak views, we do not force the router to use all experts equally. Instead, we only penalize excessive concentration of expert usage within a mini-batch, so that routing remains adaptive while avoiding collapse. Let
\begin{equation}
\bar{\pi}_i = \frac{1}{B}\sum_{b=1}^{B}\pi_{b,i}
\end{equation}
be the average routing weight of expert $i$ over a mini-batch of size $B$, and define the routing concentration as
\begin{equation}
\mathrm{Conc}(\bar{\bm \pi})=\sum_i \bar{\pi}_i^2.
\end{equation}
The load-balancing loss is
\begin{equation}
\mathcal{L}_{\mathrm{bal}}
=
\max\!\left(
\mathrm{Conc}(\bar{\bm \pi}) - \frac{\rho}{V+1},\ 0
\right),
\end{equation}
where $\rho>1$ controls the tolerated concentration.

\paragraph{\textbf{Expert diversity regularization.}}
Because the private experts are optimized jointly, they may still drift toward redundant hidden representations. To encourage specialization, we regularize their hidden features to be decorrelated. 
Let $\bm q^{(v)}$ denote the hidden representation produced by the $v$-th private expert before its evidential classification head, and let $\hat{\bm q}^{(v)}=\bm q^{(v)}/\|\bm q^{(v)}\|_2$. We define
\begin{equation}
\mathcal{L}_{\mathrm{div}}
=
\frac{2}{V(V-1)}
\sum_{1 \le i < j \le V}
\left(
\hat{\bm q}^{(i)\top}\hat{\bm q}^{(j)}
\right)^2.
\end{equation}
The squared cosine form penalizes both positive and negative alignment, thereby encouraging different private experts to capture different view-conditioned directions.

The overall training objective is
\begin{equation}
\mathcal{L}
=
\mathcal{L}_{\mathrm{fused}}
+
\lambda \mathcal{L}_{\mathrm{view}}
+
\beta \mathcal{L}_{\mathrm{bal}}
+
\gamma \mathcal{L}_{\mathrm{div}},
\end{equation}
where $\lambda$, $\beta$, and $\gamma$ control the strengths of auxiliary expert supervision, load balancing, and diversity regularization, respectively.

\section{Theoretical Analysis}

This section formalizes two points that directly support our design. First, branch-local uncertainty is scale-sensitive: changing evidence magnitude changes subjective-logic uncertainty even when the underlying class-support pattern is unchanged. Second, when optimal fusion authority depends on joint cross-view context, branch-local self-weighting is fundamentally limited, whereas a unified router can exploit the required joint information.

\subsection{Why branch-local uncertainty is scale-sensitive}

For an expert outputting non-negative evidence $\bm e \in \mathbb{R}_{\ge 0}^{K}$, define
\begin{equation}
\bm \alpha = \bm e + \bm 1, \qquad
S = \sum_{k=1}^{K}\alpha_k, \qquad
u = \frac{K}{S}.
\end{equation}
Thus, uncertainty is inversely proportional to the total evidence scale.

Consider a fixed non-negative support pattern $\bm r \in \mathbb{R}_{\ge 0}^{K}$ and define a one-parameter evidence family
\begin{equation}
\bm e(t)=t \bm r, \qquad t>0.
\end{equation}
Then
\begin{equation}
\bm \alpha(t)=\bm 1+t \bm r,
\qquad
u(t)=\frac{K}{K+tR},
\end{equation}
where $R=\sum_{k=1}^{K}r_k$.

\begin{theorem}[Scale changes uncertainty without changing support direction]
\label{thm:scale_bias}
For the above family,
\begin{equation}
\frac{d\,u(t)}{dt}
=
-\frac{KR}{(K+tR)^2}
<0.
\end{equation}
Moreover, if $r_y > R/K$ for the ground-truth class $y$, then the predictive probability of the true class increases with $t$ as well. Therefore, along the same class-support pattern, increasing the evidence scale decreases the induced uncertainty even though the underlying support direction is unchanged.
\end{theorem}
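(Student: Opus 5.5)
The plan is a direct computation in two parts, one for the uncertainty and one for the true-class probability, followed by a remark that the support direction is invariant. Assume $R>0$, i.e.\ $\bm r\neq\bm 0$; otherwise the family is constant, the strict inequality fails, and the condition $r_y>R/K$ cannot hold. Along the family we have $S(t)=K+tR$ and $u(t)=K/(K+tR)$.

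\textbf{Uncertainty.} Differentiate $u(t)$ by the quotient rule:
\[
\frac{du}{dt}=-\frac{KR}{(K+tR)^2}.
\]
This is strictly negative because $K\ge 1$, $R>0$, and the denominator is positive for $t>0$. Hence $u$ is strictly decreasing in the scale $t$.

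\textbf{True-class probability.} Write
\[
p_y(t)=\frac{1+tr_y}{K+tR}.
\]
Its derivative is
\[
p_y'(t)=\frac{r_y(K+tR)-R(1+tr_y)}{(K+tR)^2}=\frac{Kr_y-R}{(K+tR)^2}.
\]
The $t$-terms cancel in the numerator. So the sign of $p_y'$ is the sign of $Kr_y-R$, which is positive exactly when $r_y>R/K$.

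As an optional check, the same conclusion follows from writing $p_y(t)$ as a convex combination of the uniform value $1/K$ and $r_y/R$, with weight $tR/(K+tR)$ on $r_y/R$. That weight increases in $t$, and $r_y/R>1/K$ is precisely the stated condition.

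\textbf{Support direction.} The normalized pattern $\bm e(t)/\sum_k e_k(t)=\bm r/R$ does not depend on $t$. The belief masses are $b_k=tr_k/(K+tR)$, so the ratios $b_k/b_j=r_k/r_j$ are also constant in $t$. This is the sense in which the support direction is unchanged while $u$ falls.

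There is no real obstacle. The only points needing care are the cancellation in the numerator of $p_y'$ and stating the implicit assumption $R>0$ explicitly.
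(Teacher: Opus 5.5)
Your proposal is correct and follows the paper's proof essentially line for line: the same direct differentiation gives $u'(t)=-KR/(K+tR)^2$ and $p_y'(t)=(Kr_y-R)/(K+tR)^2$, and the paper's appendix adds the same caveat that the strict decrease requires $R>0$. Your convex-combination check and your observation that $b_k/b_j=r_k/r_j$ is independent of $t$ are harmless additions, and they make the ``unchanged support direction'' claim precise.
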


\noindent\textit{Proof.}
The expression for $u(t)$ follows directly from the Dirichlet definition, and differentiating with respect to $t$ gives the stated derivative. If $r_y>R/K$, the predictive probability of the true class is also increasing in $t$. \qed

Theorem~\ref{thm:scale_bias} explains why branch-local uncertainty is vulnerable to scale bias. Two branches may encode essentially the same relative class preference, yet the one with a larger evidence scale will appear less uncertain and hence more trusted.

\begin{table*}[t]
\centering
\caption{Five-seed benchmark accuracy on the first seven reported datasets. Each entry reports mean $\pm$ standard deviation over five seeds. Method names include citation tags, while the `References' column lists venue and year.}
\label{tab:main-results-front}
\setlength{\tabcolsep}{5.8pt}
\renewcommand{\arraystretch}{0.95}
\begin{tabular}{l|l|ccccccc}
\toprule
\textbf{Method} & \textbf{Ref.} & \textbf{HandWritten} & \textbf{Scene} & \textbf{LandUse} & \textbf{NUS} & \textbf{Caltech-6V} & \textbf{PIE} & \textbf{WebKB}\\
\midrule
TMC~\cite{han2021trusted} & ICLR'21 & 98.00$\pm$0.88 & 75.76$\pm$0.95 & 60.95$\pm$2.25 & 44.70$\pm$0.36 & 93.72$\pm$0.30 & 91.32$\pm$2.43 & 82.44$\pm$2.84 \\
TMDLO~\cite{liu2022trusted} & AAAI'22 & 97.20$\pm$1.03 & 66.87$\pm$2.97 & 52.67$\pm$3.36 & 43.09$\pm$0.42 & 86.15$\pm$2.42 & 73.09$\pm$1.89 & 83.41$\pm$2.84 \\
ETMC~\cite{han2023dynamic} & PAMI'23 & 98.30$\pm$0.48 & 78.10$\pm$1.14 & 65.48$\pm$1.27 & \underline{48.53$\pm$0.34} & 94.06$\pm$0.47 & 94.85$\pm$1.14 & 82.93$\pm$2.67 \\
RCML~\cite{xu2024reliable} & AAAI'24 & 98.00$\pm$0.79 & 76.03$\pm$1.35 & 65.86$\pm$1.72 & 44.43$\pm$0.28 & 94.60$\pm$0.38 & 95.29$\pm$1.10 & 82.93$\pm$2.67 \\
ETF~\cite{lu2025navigating} & ICML'25 & 96.65$\pm$1.12 & 68.21$\pm$0.79 & 19.52$\pm$8.86 & 44.12$\pm$0.69 & 93.97$\pm$0.48 & 95.59$\pm$1.40 & 80.49$\pm$3.09 \\
FUML~\cite{duan2025fuml} & ICML'25 & 98.90$\pm$0.54 & 80.65$\pm$1.03 & 76.62$\pm$1.24 & 48.07$\pm$0.34 & 95.61$\pm$0.37 & 96.91$\pm$1.35 & 72.20$\pm$18.09 \\
TMCEK~\cite{liang2025expert} & ICML'25 & 98.30$\pm$0.84 & 77.37$\pm$1.20 & 70.10$\pm$1.18 & 43.20$\pm$0.32 & 95.40$\pm$0.51 & \underline{97.79$\pm$0.93} & \underline{85.37$\pm$2.67} \\
TUNED~\cite{huang2025trusted} & AAAI'25 & 98.90$\pm$0.34 & 78.15$\pm$0.69 & 72.95$\pm$1.80 & 43.35$\pm$0.31 & 95.15$\pm$0.75 & 89.56$\pm$2.56 & 80.00$\pm$2.39 \\
TEF~\cite{liang2025evolutionary} & ICLR'25 & 98.80$\pm$0.51 & 78.00$\pm$0.48 & \textbf{80.48$\pm$1.41} & 47.14$\pm$0.52 & 94.85$\pm$0.28 & 97.65$\pm$1.27 & 84.88$\pm$2.39 \\
SAEML~\cite{xu2025beyond} & MM'25 & 98.80$\pm$0.53 & 79.67$\pm$1.42 & 74.05$\pm$1.23 & 42.94$\pm$0.25 & 93.81$\pm$0.51 & 95.88$\pm$1.78 & 84.88$\pm$2.39 \\
RTMC~\cite{zhou2025refining} & WWW'25 & 84.55$\pm$1.61 & 68.92$\pm$1.31 & 54.38$\pm$0.36 & 35.12$\pm$0.69 & 94.60$\pm$0.33 & 92.79$\pm$1.99 & 78.54$\pm$2.39 \\
RCMCL~\cite{hu2026robust} & PAMI'26 & 97.15$\pm$1.06 & 70.81$\pm$1.37 & 45.24$\pm$1.15 & 37.75$\pm$0.28 & 94.14$\pm$0.70 & 96.18$\pm$1.18 & 81.95$\pm$2.49 \\
\midrule
NLC~\cite{xu2025noisy} & AAAI'25 & 98.30$\pm$0.73 & 80.38$\pm$1.56 & 70.29$\pm$1.15 & 47.85$\pm$0.49 & 92.34$\pm$0.81 & 89.71$\pm$2.42 & 71.71$\pm$9.58 \\
MAMC~\cite{lin2025enhance} & ICLR'25 & 98.75$\pm$0.47 & \underline{81.50$\pm$0.70} & \underline{79.33$\pm$0.59} & 48.14$\pm$0.34 & \underline{96.03$\pm$0.30} & 97.06$\pm$1.04 & 84.39$\pm$1.19 \\
BCM~\cite{lan2025BCM} & MM'25 & \underline{99.00$\pm$0.42} & 80.71$\pm$0.60 & 78.81$\pm$1.86 & 47.25$\pm$0.46 & 94.35$\pm$0.84 & 94.56$\pm$1.95 & \underline{85.37$\pm$3.09} \\
\midrule
\rowcolor[RGB]{234, 238, 234}
TMUR & Ours & \textbf{99.10$\pm$0.46} & \textbf{82.88$\pm$1.36} & \textbf{80.48$\pm$0.78} & \textbf{50.07$\pm$0.26} & \textbf{96.69$\pm$0.36} & \textbf{97.94$\pm$0.86} & \textbf{88.29$\pm$0.98} \\
\bottomrule
\end{tabular}
\end{table*}

\subsection{Why unified routing is preferable to branch-local self-weighting}

Let $\mathcal{L}_x(\bm w)$ denote the sample-wise fusion loss as a function of the expert-weight vector
\begin{equation}
\bm w \in \Delta^{V},
\end{equation}
where $\Delta^{V}$ is the probability simplex over the $V+1$ experts. Define the oracle routing rule
\begin{equation}
\bm w^\star(x)=\arg\min_{\bm w \in \Delta^{V}} \mathcal{L}_x(\bm w).
\end{equation}
Suppose branch-local self-weighting rules can only access local statistics $\bm s(x)$, such as branch-wise evidence or uncertainty, and therefore take the form
\begin{equation}
\bm w_{\mathrm{local}}(x)=\psi(\bm s(x)).
\end{equation}
By contrast, a unified router can access a joint representation $\bm g(x)$ and output
\begin{equation}
\bm w_{\mathrm{global}}(x)=\phi(\bm g(x)).
\end{equation}

\begin{theorem}[Branch-local self-weighting has an irreducible information gap]
\label{thm:router_advantage}
Assume that for each sample $x$, the fusion loss $\mathcal{L}_x(\bm w)$ is $\mu$-strongly convex on $\Delta^{V}$ for some $\mu>0$. Let
\begin{equation}
\mathcal{W}_{\mathrm{local}}
:=
\{\bm w(x)=\psi(\bm s(x))\}
\end{equation}
denote the class of branch-local self-weighting rules. Then
\begin{equation}
\inf_{\bm w\in\mathcal{W}_{\mathrm{local}}}
\mathbb{E}\!\left[\mathcal{L}_x(\bm w(x))\right]
-
\mathbb{E}\!\left[\mathcal{L}_x(\bm w^\star(x))\right]
\ge
\frac{\mu}{2}
\mathbb{E}\!\left[
\mathrm{Var}\!\left(\bm w^\star(x)\mid \bm s(x)\right)
\right].
\end{equation}
In particular, if the optimal authority assignment depends on cross-view relations that are not recoverable from branch-local statistics alone, then every branch-local self-weighting rule incurs a strictly positive irreducible excess risk.
\end{theorem}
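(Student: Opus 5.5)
The plan is to combine the first-order optimality of the oracle weights with strong convexity, and then use the fact that any local rule is a function of $\bm s(x)$ to lower-bound its distance to the oracle by a conditional variance. I interpret $\mathrm{Var}(\bm w^\star(x)\mid \bm s(x))$ as the trace of the conditional covariance, i.e. $\mathbb{E}[\|\bm w^\star(x)-\mathbb{E}[\bm w^\star(x)\mid \bm s(x)]\|_2^2\mid \bm s(x)]$. I also assume measurability and integrability, so that all expectations are finite.

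\textbf{Pointwise bound.} Fix a sample $x$. By $\mu$-strong convexity of $\mathcal{L}_x$ on the convex set $\Delta^{V}$, the minimizer $\bm w^\star(x)$ exists and is unique. The first-order optimality condition gives $\langle \nabla\mathcal{L}_x(\bm w^\star(x)), \bm w-\bm w^\star(x)\rangle\ge 0$ for every $\bm w\in\Delta^V$; if $\mathcal{L}_x$ is not differentiable, I use a subgradient realizing the normal-cone condition instead. Plugging this into the strong-convexity inequality yields
\begin{equation*}
\mathcal{L}_x(\bm w)-\mathcal{L}_x(\bm w^\star(x))\ \ge\ \frac{\mu}{2}\,\|\bm w-\bm w^\star(x)\|_2^2 \qquad \text{for all } \bm w\in\Delta^V .
\end{equation*}

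\textbf{Conditional-variance step.} Take any local rule $\bm w(x)=\psi(\bm s(x))\in\mathcal{W}_{\mathrm{local}}$ and apply the pointwise bound with $\bm w=\bm w(x)$. Taking expectations gives an excess risk of at least $\frac{\mu}{2}\mathbb{E}\|\psi(\bm s)-\bm w^\star\|^2$. Condition on $\bm s$. Since $\psi(\bm s)$ is $\bm s$-measurable, the best $L^2$ approximation of $\bm w^\star$ by functions of $\bm s$ is the conditional mean. Concretely, the bias-variance decomposition gives
\begin{equation*}
\mathbb{E}\bigl[\|\psi(\bm s)-\bm w^\star\|^2\mid\bm s\bigr]=\|\psi(\bm s)-\mathbb{E}[\bm w^\star\mid\bm s]\|^2+\mathrm{Var}(\bm w^\star\mid\bm s)\ \ge\ \mathrm{Var}(\bm w^\star\mid\bm s).
\end{equation*}
Applying the tower property then gives the bound for every local rule. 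Taking the infimum over $\mathcal{W}_{\mathrm{local}}$ preserves it, because the right-hand side does not depend on $\psi$.

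\textbf{``In particular'' clause.} If $\bm w^\star(x)$ is not almost surely a function of $\bm s(x)$, that is, if the optimal authority depends on cross-view information that the local statistics do not recover, then $\mathrm{Var}(\bm w^\star\mid\bm s)>0$ on a set of positive probability. The expectation is therefore strictly positive, and so is the excess risk. For contrast, when $\bm g(x)$ determines $\bm w^\star(x)$, the choice $\phi=\bm w^\star\circ\bm g^{-1}$ (on the relevant fibres) attains zero gap.

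\textbf{Main obstacle.} I expect the technical care to go into the first step on the boundary of the simplex: using the variational inequality instead of $\nabla\mathcal{L}_x(\bm w^\star)=0$, and justifying existence and measurability of $x\mapsto\bm w^\star(x)$, for which I would invoke a measurable-selection argument or simply assume it. The rest is routine.
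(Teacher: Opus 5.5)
Your proposal is correct and follows the paper's proof essentially step for step. The paper also first derives quadratic growth, $\mathcal{L}_x(\bm w)-\mathcal{L}_x(\bm w^\star(x))\ge\frac{\mu}{2}\|\bm w-\bm w^\star(x)\|_2^2$, from strong convexity and optimality of $\bm w^\star(x)$ over $\Delta^{V}$. It then uses the conditional-expectation ($L_2$ projection, or bias--variance) argument to get $\mathbb{E}\|\psi(\bm s)-\bm w^\star\|_2^2\ge\mathbb{E}[\mathrm{Var}(\bm w^\star\mid\bm s)]$, with the variance read as a trace, and it interprets the final clause the same way, as $\bm w^\star$ not being $\bm s$-measurable.

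The one technical difference is how quadratic growth is obtained. The paper avoids gradients and normal cones: it compares $\bm w^\star$ with $\bm w_t=(1-t)\bm w^\star+t\bm w$ in the strong-convexity inequality and lets $t\downarrow 0$. This removes the boundary issue you flagged as your main obstacle. Separately, existence of the minimizer requires lower semicontinuity and does not follow from strong convexity alone, though the theorem already presupposes that $\bm w^\star$ exists.
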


\noindent\textit{Proof sketch.}
Strong convexity lower-bounds excess loss by squared distance to the oracle weight $\bm w^\star(x)$. Among all rules measurable with respect to $\bm s(x)$, the best predictor of $\bm w^\star(x)$ under squared loss is the conditional expectation, and the corresponding minimum residual is the conditional variance. Therefore, if $\bm w^\star(x)$ is not measurable from branch-local statistics alone, branch-local self-weighting necessarily leaves a positive irreducible gap. \qed

\section{Experiments}
\subsection{Experimental Setup}
\paragraph{\textbf{Dataset.}}
We evaluate TMUR on fourteen public multi-view classification datasets\footnote{https://github.com/JethroJames/Awesome-Multi-View-Learning-Datasets}: HandWritten, Scene, LandUse, NUS, Caltech-6V, PIE, WebKB, Caltech, UCI, CUB, Animal, MSRCV1, BBC, and Leaves. Detailed dataset statistics, view descriptions are deferred to the appendix.

\begin{table*}[t]
\centering
\caption{Calibration results on trusted multi-view classification methods (ECE\%, lower is better).}
\label{tab:calibration_ece}
\setlength{\tabcolsep}{3pt}
\renewcommand{\arraystretch}{1}
\begin{tabular}{l|l|cccccccc}
\toprule
\textbf{Method} & \textbf{Ref.} 
& \textbf{HandWritten} & \textbf{Scene} & \textbf{LandUse} & \textbf{PIE} & \textbf{WebKB} & \textbf{Animal} & \textbf{Caltech-6V} & \textbf{Leaves} \\
\midrule
TMC~\cite{han2021trusted} & ICLR'21
& 13.34$\pm$2.13 & 46.07$\pm$1.74 & 40.29$\pm$2.33 & 74.50$\pm$0.53 & \textbf{8.91$\pm$2.95} & 43.31$\pm$7.20 & \underline{6.43$\pm$4.11} & 87.56$\pm$2.59 \\
TMDLO~\cite{liu2022trusted} & AAAI'22
& 27.16$\pm$4.29 & 23.21$\pm$2.39 & 26.71$\pm$2.62 & \underline{58.25$\pm$2.28} & 32.36$\pm$11.21 & 25.60$\pm$1.09 & 18.16$\pm$2.05 & \underline{50.23$\pm$1.91} \\
ETMC~\cite{han2023dynamic} & PAMI'23
& 8.69$\pm$2.60 & 40.03$\pm$3.55 & 41.78$\pm$1.53 & 84.49$\pm$1.53 & 21.56$\pm$6.58 & \underline{23.93$\pm$10.13} & 7.76$\pm$3.54 & 87.74$\pm$3.86 \\
RCML~\cite{xu2024reliable} & AAAI'24
& 64.90$\pm$0.81 & 58.64$\pm$1.61 & 51.92$\pm$1.91 & 87.05$\pm$1.80 & 38.95$\pm$4.01 & 51.11$\pm$2.23 & 47.07$\pm$1.47 & 90.43$\pm$1.15 \\
TEF~\cite{liang2025evolutionary} & ICLR'25
& \underline{8.13$\pm$5.73} & \underline{10.83$\pm$2.06} & 24.58$\pm$2.18 & 70.47$\pm$4.63 & 47.51$\pm$8.00 & 24.91$\pm$0.52 & 9.21$\pm$2.02 & 72.84$\pm$1.71 \\
TMCEK~\cite{liang2025expert} & ICML'25
& 62.25$\pm$4.67 & 61.33$\pm$1.97 & 58.32$\pm$0.99 & 92.16$\pm$0.73 & 28.62$\pm$5.77 & 54.89$\pm$2.00 & 54.25$\pm$4.19 & 93.83$\pm$0.67 \\
TUNED~\cite{huang2025trusted} & AAAI'25
& 48.83$\pm$0.67 & 55.69$\pm$1.17 & 52.62$\pm$1.14 & 81.17$\pm$1.55 & 31.17$\pm$2.90 & 64.18$\pm$1.24 & 52.09$\pm$1.42 & 84.56$\pm$1.52 \\
SAEML~\cite{xu2025beyond} & MM'25
& 9.83$\pm$1.71 & 12.45$\pm$0.99 & \underline{16.62$\pm$1.22} & 60.35$\pm$4.20 & \underline{10.11$\pm$3.12} & 30.68$\pm$0.71 & \textbf{5.01$\pm$1.10} & 66.96$\pm$1.17 \\
RTMC~\cite{zhou2025refining} & WWW'25
& 72.39$\pm$1.70 & 59.47$\pm$1.32 & 47.88$\pm$0.36 & 90.69$\pm$1.98 & 47.41$\pm$3.69 & 81.50$\pm$0.97 & 87.16$\pm$0.34 & 84.50$\pm$2.54 \\
RCMCL~\cite{hu2026robust} & PAMI'26
& 51.09$\pm$4.38 & 52.62$\pm$1.38 & 35.62$\pm$1.16 & 86.92$\pm$1.12 & 42.86$\pm$3.71 & 60.70$\pm$0.71 & 42.15$\pm$1.99 & 93.61$\pm$0.72 \\
\midrule
\rowcolor[RGB]{234, 238, 234}
TMUR & Ours
& \textbf{5.36$\pm$1.66} & \textbf{6.87$\pm$1.69} & \textbf{7.35$\pm$0.94} & \textbf{52.50$\pm$3.93} & 12.01$\pm$4.44 & \textbf{9.79$\pm$5.73} & 7.55$\pm$5.23 & \textbf{44.43$\pm$8.60} \\
\bottomrule
\end{tabular}
\end{table*}

\paragraph{\textbf{Compared methods.}} 
We compare TMUR with two groups of baselines. The trusted baselines include TMC~\cite{han2021trusted}, TMDLO~\cite{liu2022trusted}, and ETMC~\cite{han2023dynamic}, which formulate multi-view predictions as evidence or opinions and perform uncertainty-aware fusion.
They also include more recent trusted methods for conflict handling and reliability modeling, including RCML~\cite{xu2024reliable}, ETF~\cite{lu2025navigating}, FUML~\cite{duan2025fuml}, and RCMCL~\cite{hu2026robust}, which focus on conflict-aware aggregation, trust discounting, fuzzy uncertainty modeling, or robust collaborative learning under conflictive views.
In addition, we compare with recent refinements of trusted fusion, including TMCEK~\cite{liang2025expert}, TUNED~\cite{huang2025trusted}, TEF~\cite{liang2025evolutionary}, SAEML~\cite{xu2025beyond}, and RTMC~\cite{zhou2025refining}, which introduce expert knowledge constraints, feature-neighborhood dynamics, evolutionary fusion architecture search, strength-adaptive evidence weighting, and a refined treatment of confusion and ignorance, respectively.
The non-trusted baselines include NLC~\cite{xu2025noisy}, which addresses noisy-label multi-view classification via label calibration, MAMC~\cite{lin2025enhance}, which combines multi-scale alignment with expanded decision boundaries, and BCM~\cite{lan2025BCM}, which performs multi-view classification in Hamming space with a bit-level calibration mechanism.

\paragraph{\textbf{Implementation details.}}
All experiments are conducted on NVIDIA RTX 4090 GPUs using PyTorch 2.2.2. Following our benchmark protocol, every method is evaluated with five fixed random seeds, and the reported accuracy is the mean $\pm$ standard deviation over the five runs. We use Adam with an initial learning rate of $10^{-3}$, cosine learning-rate decay, and a fixed batch size of 128. The auxiliary expert-supervision weight $\lambda$ is fixed to 0.3 on all datasets, while the load-balancing and diversity weights $\beta$ and $\gamma$ are selected per dataset; their final values are reported in the appendix.


\subsection{Main Results} 
Table~\ref{tab:main-results-front} reports the five-seed accuracy results on seven representative datasets, with each entry shown as mean $\pm$ standard deviation. To keep the comparison readable under the enlarged baseline set, the methods are separated into trusted baselines, non-trusted baselines, and our model, with horizontal rules marking the three groups. The remaining are reported in appendix.

TMUR achieves the best or tied-best accuracy on all of the seven datasets in Table~\ref{tab:main-results-front}, including LandUse, NUS, PIE, Caltech-6V, and WebKB. In particular, the gains on NUS and WebKB are clear, suggesting that unified routing is especially helpful when the reliability of different views varies substantially across samples. The appendix results on the remaining seven datasets show the same overall pattern: TMUR remains highly competitive and achieves the best or tied-best performance on six datasets, indicating that the proposed routing mechanism improves accuracy without sacrificing generality across diverse multi-view benchmarks.

\begin{figure}[!ht]
\centering
\includegraphics[width=\columnwidth]{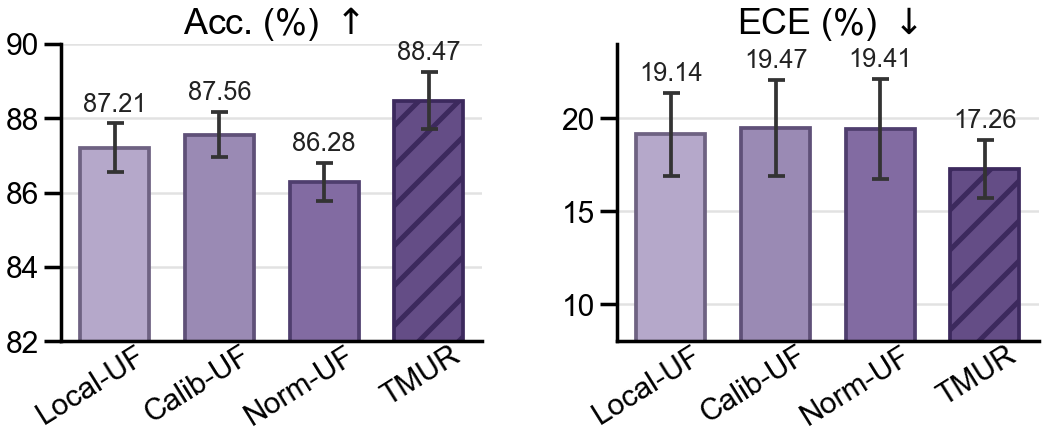}
\vskip -0.1in
\caption{Comparison with branch-local uncertainty-fusion variants on six datasets. Results are averaged over five runs; error bars denote standard deviations.}
\Description{Two bar plots compare Local-UF, Calib-UF, Norm-UF, and TMUR in terms of average accuracy and uncertainty ECE. TMUR obtains the highest accuracy and the lowest ECE.}
\label{fig:uncertainty_aware}
\vskip -0.1in
\end{figure}

\begin{table*}[!t]
\centering
\caption{Ablation study of different components in our method.}
\label{tab:ablation}
\setlength{\tabcolsep}{3.6pt}
\begin{tabular}{l|cccccccc}
\toprule
\textbf{Variant}
& \textbf{HandWritten} & \textbf{Scene} & \textbf{LandUse} & \textbf{PIE} & \textbf{WebKB} & \textbf{Animal} & \textbf{Caltech-6V} & \textbf{Leaves} \\
\midrule
w/o $\mathcal{L}_{\mathrm{bal}}$
& 98.85$\pm$0.46 & \underline{82.63}$\pm$1.63 & \underline{80.05$\pm$0.87} & 97.50$\pm$0.88 & 87.80$\pm$1.54 & \underline{87.62$\pm$0.60} & 96.28$\pm$0.48 & 99.44$\pm$0.23 \\

w/o $\mathcal{L}_{\mathrm{div}}$
& \underline{99.00$\pm$0.45} & 82.23$\pm$1.47 & 79.48$\pm$0.73 & \underline{97.79$\pm$0.93} & \textbf{88.29$\pm$0.98} & 87.55$\pm$0.71 & 96.40$\pm$0.45 & \underline{99.62$\pm$0.23} \\
w/o Cross-Attention
& 98.95$\pm$0.51 & 82.43$\pm$1.70 & 78.95$\pm$0.83 & 97.65$\pm$0.72 & 87.80$\pm$1.54 & 87.58$\pm$0.59 & \underline{96.61$\pm$0.24} & \underline{99.62$\pm$0.13} \\
Full method
& \textbf{99.10$\pm$0.46} & \textbf{82.88$\pm$1.36} & \textbf{80.48$\pm$0.78} & \textbf{97.94$\pm$0.86} & \textbf{88.29$\pm$0.98} & \textbf{87.83$\pm$0.57} & \textbf{96.69$\pm$0.36} & \textbf{99.69$\pm$0.20} \\
\bottomrule
\end{tabular}
\end{table*}

\begin{table}[t]
\centering
\setlength{\tabcolsep}{3pt}
\renewcommand{\arraystretch}{0.78}
\caption{View Strength Variation Experiments. The view strength in the multi-view dataset is randomly rescaled.}
\label{tab:view_strength_variation}
\begin{tabular}{cc|cccc|c}
\toprule
\multirow{2}{*}{\textbf{Dataset}} & \multirow{2}{*}{\textbf{Setting}} & \multicolumn{5}{c}{\textbf{Methods}} \\
\cmidrule(lr){3-7}
& & \textbf{FUML} & \textbf{RCML} & \textbf{TMC} & \textbf{SAEML} & \textbf{Ours} \\
\midrule
\multirow{2}{*}{\textbf{PIE}}
& R1 & 92.89 & 93.38 & 88.73 & 93.87 & \textbf{96.08} \\
& R2 & 82.35 & 89.95 & 81.86 & 84.80 & \textbf{91.67} \\
\midrule
\multirow{2}{*}{\textbf{Scene}}
& R1 & 67.41 & 61.32 & 68.75 & 69.75  & \textbf{71.39}\\
& R2 & 69.12 & 63.43 & 71.16 & 73.39  & \textbf{74.99} \\
\midrule
\multirow{2}{*}{\textbf{LandUse}}
& R1  & 45.40 & 57.62 & 52.78 & 49.29  & \textbf{63.97} \\
& R2  & 42.14 & 54.68 & 56.90 & 52.78  & \textbf{65.16} \\
\midrule
\multirow{2}{*}{\textbf{Caltech-6V}}
& R1 & 94.70 & 94.56 & 93.93 & 92.89 & \textbf{95.33} \\
& R2 & 93.17 & \textbf{94.98} & 93.86 & 88.15 & 94.84 \\
\midrule
\multirow{2}{*}{\textbf{WebKB}}
& R1 & 64.23 & 81.30 & 82.11 & 81.30 & \textbf{86.18} \\
& R2 & 61.79 & 77.24 & 79.67 & 78.86 & \textbf{86.99} \\
\bottomrule
\end{tabular}%
\end{table}

Accuracy alone is insufficient for trustworthy multi-view learning; the fused prediction should also remain well calibrated. Table~\ref{tab:calibration_ece} therefore compares ECE on eight representative datasets against trusted baselines. TMUR achieves the lowest ECE on six of the eight datasets, namely HandWritten, Scene, LandUse, PIE, Animal, and Leaves, and remains competitive on the remaining two datasets. Averaged over all eight datasets, TMUR reduces ECE from 26.50\% achieved by the strongest baseline SAEML to 18.23\%, yielding an absolute improvement of more than 8 ECE points. These results are important because they show that our gains are not merely due to higher classification accuracy; instead, the proposed unified routing also produces more reliable confidence estimates under cross-view evidence mismatch. Taken together, Table~\ref{tab:main-results-front} and Table~\ref{tab:calibration_ece} show that TMUR improves both \emph{what} the model predicts and \emph{how reliably} it expresses confidence in that prediction.

\subsection{Mechanism and Robustness Evaluation}
\paragraph{\textbf{Comparison with calibrated/normalized local fusion.}}

To address whether branch-local self-assessed uncertainty becomes sufficient after calibration or normalization, we compare TMUR with three variants that use the same evidence outputs from the view-private branches and the collaborative branch, but replace the unified router with branch-local \textbf{u}ncertainty \textbf{f}usion. \textbf{Local-UF} weights each branch by raw evidential uncertainty; \textbf{Calib-UF} calibrates branch uncertainties on the validation set before fusion; \textbf{Norm-UF} normalizes branch evidence strength using validation-set statistics before fusion. We conducted experiments on six datasets using a 7:1:2 split. Fig.~\ref{fig:uncertainty_aware} shows that calibration or evidence normalization alone does not remove the need for unified arbitration. TMUR improves average accuracy by 0.91 points over the strongest local variant while reducing ECE. 
This suggests that the key benefit comes from learning sample-wise fusion weights from the joint multi-view context, rather than only rescaling branch-local uncertainty or evidence strength through post-hoc calibration.


\paragraph{\textbf{Router--reliability alignment.}}
We further examine whether the learned routing weights align with expert reliability on Caltech-6V, LandUse, Scene, and WebKB. Averaged over five seeds, the pooled sample--expert correlation between routing weight and expert correctness is positive ($\rho_s=0.179$), while the expert-level mean routing weight shows substantial Pearson and Spearman correlations with standalone expert accuracy ($\rho_e=0.776$ and $0.794$, respectively), indicating that the router tends to allocate larger weights to more reliable experts.

\paragraph{\textbf{View-strength variation.}}

Following Xu et al.~\cite{xu2025beyond}, we perturb the feature magnitude of each view at test time by randomly scaling its feature vector. For each random seed, R1 and R2 denote two independently sampled view-wise scaling configurations, with each scaling factor sampled from $[0.05,0.99]$. Table~\ref{tab:view_strength_variation} reports the results on five datasets. TMUR achieves the best performance in nine out of ten settings, showing stronger robustness when view strength is distorted. This supports our motivation that unified routing provides more stable fusion than branch-local confidence under cross-view scale variation.

\begin{figure}[!ht]
\centering
\includegraphics[width=0.95\columnwidth]{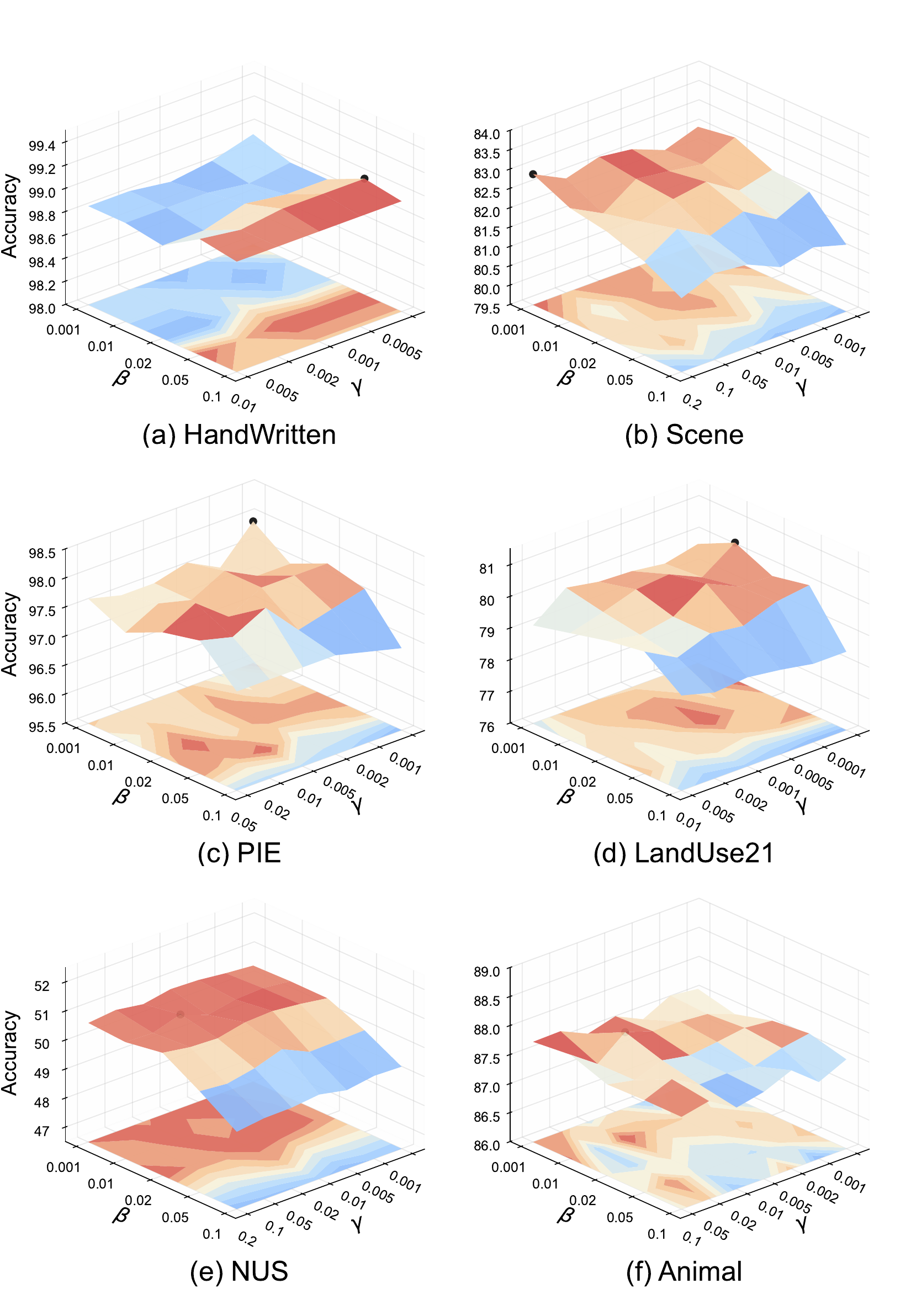}
\caption{Joint sensitivity of TMUR to $\beta$ and $\gamma$ on four representative datasets.}
\Description{Four accuracy surfaces show the joint sensitivity of TMUR to the load-balancing weight beta and diversity weight gamma on HandWritten, Scene, PIE, and LandUse.}
\label{fig:hyperSensitive}
\end{figure}

\subsection{Ablation Study}
Table~\ref{tab:ablation} shows the full TMUR model achieves the best performance on seven of the eight reported datasets and ties for the best result on WebKB, indicating that the three components are complementary rather than redundant. Removing $\mathcal{L}_{\mathrm{bal}}$ leads to visible drops on datasets such as LandUse, PIE, and Leaves, which suggests that the soft load-balancing term helps prevent expert over-concentration while still allowing sample-wise specialization. Removing $\mathcal{L}_{\mathrm{div}}$ degrades performance on Scene, LandUse, Animal, and Caltech-6V, showing that diversity regularization helps the router exploit complementary evidence from different experts. Finally, removing router-side cross-view attention consistently weakens the full model, with the clearest drop appearing on LandUse, which supports our design choice of enhancing the arbitration signal rather than feeding cross-view interaction back into the private experts.

\subsection{Hyperparameter Sensitivity}
We further study the joint sensitivity of the load-balancing weight $\beta$ and the diversity weight $\gamma$ on HandWritten, Scene, PIE, and LandUse. As shown in Fig.~\ref{fig:hyperSensitive}, the accuracy surfaces vary smoothly across a relatively broad hyperparameter region, instead of peaking sharply at a single fragile setting. This indicates that TMUR is reasonably robust to the choices of $\beta$ and $\gamma$, and that its gains do not rely on narrow hyperparameter tuning.

\section{Conclusion}
This paper revisits trustworthy multi-view classification from the perspective of evidence comparability. We argue that heterogeneous views and independently optimized branches need not produce evidence on a common numeric scale, making direct cross-view evidence comparison an unreliable basis for fusion arbitration. To address this issue, we propose TMUR, a unified-routing framework in which view-private experts, a collaborative expert, and a global router play explicitly separated roles. The final prediction remains evidential, while fusion authority is reassigned to router-mediated sample-wise arbitration under the joint multi-view context. Extensive experiments on fourteen datasets, together with calibration and robustness analyses, show that TMUR improves both predictive performance and reliability.

\section*{ACKNOWLEDGMENTS}
This work was supported in part by the National Natural Science Foundation of China under Grants 62425605, 62133012, 62572375, 62303366 and 62472340, in part by Natural Science Basic Research Program of Shaanxi under Grants 2025JC-QYXQ-040, 2025JC-YBQN-900, and in part by Xidian University Specially Funded Project for Interdisciplinary Exploration under Grant TZJHF202506.

\bibliographystyle{ACM-Reference-Format}
\bibliography{reference}

@book{shafer1976mathematical,
  title={A Mathematical Theory of Evidence},
  author={Shafer, Glenn},
  publisher={Princeton University Press},
  year={1976}
}

@article{zadeh1986simple,
  title={A simple view of the Dempster-Shafer theory of evidence and its implication for the rule of combination},
  author={Zadeh, Lotfi A},
  journal={AI magazine},
  volume={7},
  number={2},
  pages={85--85},
  year={1986}
}

@ARTICLE{jacobs1991adaptive,
  author={Jacobs, Robert A. and Jordan, Michael I. and Nowlan, Steven J. and Hinton, Geoffrey E.},
  journal={Neural Computation}, 
  title={Adaptive Mixtures of Local Experts}, 
  year={1991},
  volume={3},
  number={1},
  pages={79-87},
  doi={10.1162/neco.1991.3.1.79}
}

@inproceedings{shazeer2017outrageously,
  title={Outrageously Large Neural Networks: The Sparsely-Gated Mixture-of-Experts Layer},
  author={Noam Shazeer and Azalia Mirhoseini and Krzysztof Maziarz and Andy Davis and Quoc Le and Geoffrey Hinton and Jeff Dean},
  booktitle={International Conference on Learning Representations},
  year={2017},
  url={https://openreview.net/forum?id=B1ckMDqlg}
}

@inproceedings{guo2017calibration,
  title={On calibration of modern neural networks},
  author={Guo, Chuan and Pleiss, Geoff and Sun, Yu and Weinberger, Kilian Q},
  booktitle={Proceedings of the 34th International Conference on Machine Learning},
  pages={1321--1330},
  year={2017},
  organization={PMLR}
}

@article{sensoy2018evidential,
  title={Evidential deep learning to quantify classification uncertainty},
  author={Sensoy, Murat and Kaplan, Lance and Kandemir, Melih},
  journal={Advances in neural information processing systems},
  volume={31},
  year={2018}
}

@inproceedings{han2021trusted,
title={Trusted Multi-View Classification},
author={Zongbo Han and Changqing Zhang and Huazhu Fu and Joey Tianyi Zhou},
booktitle={International Conference on Learning Representations},
year={2021},
url={https://openreview.net/forum?id=OOsR8BzCnl5}
}

@book{josang2018subjective,
author = {J{\o}sang, Audun},
title = {Subjective Logic: A Formalism for Reasoning Under Uncertainty},
year = {2018},
isbn = {3319825550},
publisher = {Springer Publishing Company, Incorporated},
edition = {1st},
}

@article{han2023dynamic,
  title={Trusted multi-view classification with dynamic evidential fusion},
  author={Han, Zongbo and Zhang, Changqing and Fu, Huazhu and Zhou, Joey Tianyi},
  journal={IEEE transactions on pattern analysis and machine intelligence},
  volume={45},
  number={2},
  pages={2551--2566},
  year={2023},
  publisher={IEEE}
}

@inproceedings{liu2022trusted,
  title={Trusted multi-view deep learning with opinion aggregation},
  author={Liu, Wei and Yue, Xiaodong and Chen, Yufei and Denoeux, Thierry},
  booktitle={Proceedings of the AAAI Conference on Artificial Intelligence},
  volume={36},
  number={7},
  pages={7585--7593},
  year={2022}
}

@inproceedings{deng2023uncertainty,
  title={Uncertainty estimation by fisher information-based evidential deep learning},
  author={Deng, Danruo and Chen, Guangyong and Yu, Yang and Liu, Furui and Heng, Pheng-Ann},
  booktitle={International conference on machine learning},
  pages={7596--7616},
  year={2023},
  organization={PMLR}
}

@article{wang2023uncertainty,
  title={Uncertainty-inspired open set learning for retinal anomaly identification},
  author={Wang, Meng and Lin, Tian and Wang, Lianyu and Lin, Aidi and Zou, Ke and Xu, Xinxing and Zhou, Yi and Peng, Yuanyuan and Meng, Qingquan and Qian, Yiming and others},
  journal={Nature Communications},
  volume={14},
  number={1},
  pages={6757},
  year={2023},
  publisher={Nature Publishing Group UK London}
}

@inproceedings{zhang2023provable,
  title={Provable dynamic fusion for low-quality multimodal data},
  author={Zhang, Qingyang and Wu, Haitao and Zhang, Changqing and Hu, Qinghua and Fu, Huazhu and Zhou, Joey Tianyi and Peng, Xi},
  booktitle={International conference on machine learning},
  pages={41753--41769},
  year={2023},
  organization={PMLR}
}

@inproceedings{shirasuna2024moe_molecular,
  title={A multi-view mixture-of-experts based on language and graphs for molecular properties prediction},
  author={Shirasuna, Victor Yukio and Soares, Eduardo and Brazil, Emilio Vital and Gutierrez, Karen Fiorella Aquino and Cerqueira, Renato and Takeda, Seiji and Kishimoto, Akihiro},
  booktitle={ICML 2024 AI for Science Workshop},
  year={2024}
}

@inproceedings{xu2024reliable,
  title={Reliable conflictive multi-view learning},
  author={Xu, Cai and Si, Jiajun and Guan, Ziyu and Zhao, Wei and Wu, Yue and Gao, Xiyue},
  booktitle={Proceedings of the AAAI conference on artificial intelligence},
  volume={38},
  number={14},
  pages={16129--16137},
  year={2024}
}

@ARTICLE{zhang2024AdapRegress,
  author={Zhang, Chenglong and Zhu, Xinjie and Wang, Zidong and Zhong, Yan and Sheng, Weiguo and Ding, Weiping and Jiang, Bingbing},
  journal={IEEE Transactions on Emerging Topics in Computational Intelligence}, 
  title={Discriminative Multi-View Fusion via Adaptive Regression}, 
  year={2024},
  volume={8},
  number={6},
  pages={3821-3833},
  doi={10.1109/TETCI.2024.3375342}
  }

@inproceedings{li2024hyper,
  title={Hyper Evidential Deep Learning to Quantify Composite Classification Uncertainty},
  author={Li, Changbin and Li, Kangshuo and Ou, Yuzhe and Kaplan, Lance M and J{\o}sang, Audun and Cho, Jin-Hee and JEONG, DONG HYUN and Chen, Feng},
  booktitle={The Twelfth International Conference on Learning Representations}
}

@inproceedings{cao2024predictive,
  title={Predictive dynamic fusion},
  author={Cao, Bing and Xia, Yinan and Ding, Yi and Zhang, Changqing and Hu, Qinghua},
  booktitle={Proceedings of the 41st International Conference on Machine Learning},
  pages={5608--5628},
  year={2024}
}

@article{chen2025revisiting,
  title={Revisiting essential and nonessential settings of evidential deep learning},
  author={Chen, Mengyuan and Gao, Junyu and Xu, Changsheng},
  journal={IEEE Transactions on Pattern Analysis and Machine Intelligence},
  year={2025},
  publisher={IEEE}
}

@inproceedings{zhang2025mixture,
  title={Mixture of experts as representation learner for deep multi-view clustering},
  author={Zhang, Yunhe and Cai, Jinyu and Wu, Zhihao and Wang, Pengyang and Ng, See-Kiong},
  booktitle={Proceedings of the AAAI Conference on Artificial Intelligence},
  volume={39},
  number={21},
  pages={22704--22713},
  year={2025}
}

@inproceedings{dong2025trusted,
  title={Trusted Open-World Multi-View Classification with Dynamic Opinion Aggregation},
  author={Dong, Zhicheng and Yue, Xiaodong and Chen, Yufei and Zhou, Yuxian},
  booktitle={Proceedings of the 33rd ACM International Conference on Multimedia},
  pages={1181--1189},
  year={2025}
}

@inproceedings{ding2025theoretical,
  title={A Theoretical Proof of Dynamic Multimodal Fusion Exacerbates Modality Greedy},
  author={Ding, Xiaorui and Ma, Huan and Zhang, Changqing},
  booktitle={Proceedings of the 33rd ACM International Conference on Multimedia},
  pages={2429--2436},
  year={2025}
}

@inproceedings{xu2025noisy,
  title={Noisy label calibration for multi-view classification},
  author={Xu, Shilin and Sun, Yuan and Li, Xingfeng and Duan, Siyuan and Ren, Zhenwen and Liu, Zheng and Peng, Dezhong},
  booktitle={Proceedings of the AAAI Conference on Artificial Intelligence},
  volume={39},
  number={20},
  pages={21797--21805},
  year={2025}
}

@inproceedings{liu2025enhancingAAAI,
  title={Enhancing multi-view classification reliability with adaptive rejection},
  author={Liu, Wei and Chen, Yufei and Yue, Xiaodong},
  booktitle={Proceedings of the AAAI Conference on Artificial Intelligence},
  volume={39},
  number={18},
  pages={18969--18977},
  year={2025}
}

@article{liu2025enhancingTCSVT,
  title={Enhancing reliability in medical image classification of imperfect views},
  author={Liu, Wei and Chen, Yufei and Yue, Xiaodong and Zhang, Changqing and Xie, Shaorong},
  journal={IEEE Transactions on Circuits and Systems for Video Technology},
  year={2026},
  volume={36},
  number={2},
  pages={2290-2303}
}

@inproceedings{lin2025enhance,
  title={Enhance multi-view classification through multi-scale alignment and expanded boundary},
  author={Lin, Yuena and Wang, Yiyuan and Lyu, Gengyu and Deng, Yongjian and Cai, Haichun and Lin, Huibin and Wang, Haobo and Yang, Zhen},
  booktitle={The Thirteenth International Conference on Learning Representations},
  year={2025}
}

@inproceedings{zhou2025refining,
  title={Refining Confusion and Ignorance in Trusted Multi-View Classification},
  author={Zhou, Xujing and Yue, Xiaodong and Chen, Yufei and Li, Linye},
  booktitle={Companion Proceedings of the ACM on Web Conference 2025},
  pages={1549--1553},
  year={2025}
}

@inproceedings{fang2025enhancing,
  title={Enhancing Multi-view Open-set Learning via Ambiguity Uncertainty Calibration and View-wise Debiasing},
  author={Fang, Zihan and Xu, Zhiyong and Du, Lan and Du, Shide and Cai, Zhiling and Wang, Shiping},
  booktitle={Proceedings of the 33rd ACM International Conference on Multimedia},
  pages={1220--1228},
  year={2025}
}

@inproceedings{huang2025trusted,
  title={Trusted unified feature-neighborhood dynamics for multi-view classification},
  author={Huang, Haojian and Qin, Chuanyu and Liu, Zhe and Ma, Kaijing and Chen, Jin and Fang, Han and Ban, Chao and Sun, Hao and He, Zhongjiang},
  booktitle={Proceedings of the AAAI conference on artificial intelligence},
  volume={39},
  number={16},
  pages={17413--17421},
  year={2025}
}

@inproceedings{lu2025navigating,
  title={Navigating Conflicting Views: Harnessing Trust for Learning},
  author={Lu, Jueqing and Buntine, Wray and Qi, Yuanyuan and Dipnall, Joanna and Gabbe, Belinda and Du, Lan},
  booktitle={Proceedings of the 42nd International Conference on Machine Learning},
  publisher={PMLR},
  year={2025},
}

@inproceedings{lan2025BCM,
  title={Multi-view Hashing Classification},
  author={Lan, Yuhang and Xu, Shilin and Su, Chao and Ye, Run and Peng, Dezhong and Sun, Yuan},
  booktitle={Proceedings of the 33rd ACM International Conference on Multimedia},
  pages={2122--2130},
  year={2025}
}

@inproceedings{duan2025fuml,
  title={Deep fuzzy multi-view learning for reliable classification},
  author={Duan, Siyuan and Sun, Yuan and Peng, Dezhong and Duan, Guiduo and Peng, Xi and Hu, Peng},
  booktitle={Proceedings of the 42nd International Conference on Machine Learning},
  publisher={PMLR},
  year={2025}
}

@inproceedings{liang2025expert,
  title={Trusted multi-view classification with expert knowledge constraints},
  author={Liang, Xinyan and Wang, Shijie and Qian, Yuhua and Guo, Qian and Du, Liang and Jiang, Bingbing and Luo, Tingjin and Li, Feijiang},
  booktitle={Forty-second International Conference on Machine Learning},
  publisher={PMLR},
  year={2025}
}

@inproceedings{liang2025evolutionary,
  title={Trusted multi-view classification via evolutionary multi-view fusion},
  author={Liang, Xinyan and Fu, Pinhan and Qian, Yuhua and Guo, Qian and Liu, Guoqing},
  booktitle={The Thirteenth International Conference on Learning Representations},
  year={2025}
}

@inproceedings{wu2025mvcge,
  title={Where Graph Meets Heterogeneity: Multi-View Collaborative Graph Experts},
  author={Wu, Zhihao and Cai, Jinyu and Zhang, Yunhe and Lu, Jielong and Chen, Zhaoliang and Zhuang, Shuman and Wang, Haishuai},
  booktitle={Advances in Neural Information Processing Systems},
  year={2025}
}

@inproceedings{xu2025beyond,
  title={Beyond Equal Views: Strength-Adaptive Evidential Multi-View Learning},
  author={Xu, Cai and Wen, Ziqi and Zhao, Jie and Zhao, Wanqing and Yu, Jinlong and Chen, Haishun and Guan, Ziyu and Zhao, Wei},
  booktitle={Proceedings of the 33rd ACM International Conference on Multimedia},
  pages={1278--1287},
  year={2025}
}

@inproceedings{hu2025self,
  title={Self-supervised trusted contrastive multi-view clustering with uncertainty refined},
  author={Hu, Shizhe and Tian, Binyan and Liu, Weibo and Ye, Yangdong},
  booktitle={Proceedings of the AAAI Conference on Artificial Intelligence},
  volume={39},
  number={16},
  pages={17305--17313},
  year={2025}
}

@inproceedings{I2MoE,
  title={I2MoE: Interpretable Multimodal Interaction-aware Mixture-of-Experts},
  author={Xin, Jiayi and Yun, Sukwon and Peng, Jie and Choi, Inyoung and Ballard, Jenna L and Chen, Tianlong and Long, Qi},
  booktitle={Proceedings of the 42nd International Conference on Machine Learning},
  year={2025},
  organization={PMLR}
}

@article{hu2026robust,
  title={Robust Trusted Conflictive Multiview Collaborative Contrastive Learning},
  author={Hu, Shaobo and Huang, Hui and Zhang, Nan and Sun, Shiliang},
  journal={IEEE Transactions on Pattern Analysis and Machine Intelligence},
  year={2026},
  publisher={IEEE}
}

@inproceedings{shi2026not,
  title={Not All Inconsistency Is Equal: Decomposing LVLM Uncertainty into Belief Divergence and Belief Conflict},
  author={Shi, Jie and Yue, Xiaodong and Liu, Wei and Chen, Yufei and Dong, Feifan},
  booktitle={Proceedings of the AAAI Conference on Artificial Intelligence},
  volume={40},
  number={30},
  pages={25339--25347},
  year={2026}
}

@inproceedings{qin2026multi,
  title={Multi-view Learning via Trusted Pairwise Entity Energy},
  author={Qin, Yalan and Feng, Guorui and Zhang, Xinpeng},
  booktitle={Proceedings of the AAAI Conference on Artificial Intelligence},
  volume={40},
  number={29},
  pages={24954--24962},
  year={2026}
}

@article{Huang2026M2FMoE, 
title={M2FMoE: Multi-Resolution Multi-View Frequency Mixture-of-Experts for Extreme-Adaptive Time Series Forecasting}, 
volume={40}, 
DOI={10.1609/aaai.v40i26.39362},  
number={26}, 
journal={Proceedings of the AAAI Conference on Artificial Intelligence}, 
author={Huang, Yaohui and Zou, Runmin and Wang, Yun and Aslam, Laeeq and Dong, Ruipeng}, 
year={2026}, 
pages={22075-22083} }

@ARTICLE{zhang2026NoisySup,
  author={Zhang, Yilin and Xu, Cai and Jiang, Han and Guan, Ziyu and Zhao, Wei and He, Xiaofei and Sensoy, Murat},
  journal={IEEE Transactions on Pattern Analysis and Machine Intelligence},
  title={Trusted Multi-View Learning Under Noisy Supervision},
  year={2026},
  volume={48},
  number={9},
  pages={10827-10843},
  doi={10.1109/TPAMI.2026.3690466}
}

@inproceedings{liu2026beyond,
title={Beyond Magnitude: Scale-Invariant Evidential Fusion for Multi-View Classification},
author={Wei Liu and Yufei Chen and Jie Shi and Xiaodong Yue},
booktitle={Forty-third International Conference on Machine Learning},
year={2026},
url={https://openreview.net/forum?id=ZZNKx9eZMB}
}

@inproceedings{zhu2026online,
  title={Online Multi-Relational Clustering with Dominant View Mining},
  author={Zhu, Zhengzhong and Zhou, Pei and Wang, Dongsheng and Cheng, Li and Zhu, Jiangping},
  booktitle={Proceedings of the AAAI Conference on Artificial Intelligence},
  volume={40},
  number={34},
  pages={29232--29240},
  year={2026},
  doi={10.1609/aaai.v40i34.40162}
}

@inproceedings{xue2026grok,
  title={{GRPO}-based Cluster Decision Agent for Unknown-$K$ Multi-view Clustering},
  author={Xue, Xuqian and Zhang, Jun and Cai, Qi and Huang, Zhizhong and Shan, Hongming and Zhang, Junping},
  booktitle={Proceedings of the 43rd International Conference on Machine Learning},
  year={2026},
  url={https://icml.cc/virtual/2026/poster/62944}
}

@article{li2023crossview,
title = {Cross-view Graph Matching Guided Anchor Alignment for Incomplete Multi-view Clustering},
journal = {Information Fusion},
volume = {100},
pages = {101941},
year = {2023},
issn = {1566-2535},
doi = {https://doi.org/10.1016/j.inffus.2023.101941},
url = {https://www.sciencedirect.com/science/article/pii/S1566253523002579},
author = {Xingfeng Li and Yinghui Sun and Quansen Sun and Zhenwen Ren and Yuan Sun},
}

@ARTICLE{yuan2025prototype,
  author={Yuan, Honglin and Sun, Yuan and Zhou, Fei and Wen, Jing and Yuan, Shihua and You, Xiaojian and Ren, Zhenwen},
  journal={IEEE Transactions on Image Processing}, 
  title={Prototype Matching Learning for Incomplete Multi-View Clustering}, 
  year={2025},
  volume={34},
  number={},
  pages={828-841},
  doi={10.1109/TIP.2025.3529378}
}

\balance
\clearpage
\nobalance
\makeatletter
\@ACM@balancefalse
\makeatother
\appendix
\setcounter{equation}{0}
\setcounter{table}{0}
\setcounter{figure}{0}
\renewcommand{\theequation}{A\arabic{equation}}
\renewcommand{\thetable}{A\arabic{table}}
\renewcommand{\thefigure}{A\arabic{figure}}
\renewcommand{\theHequation}{appendix.\arabic{equation}}
\renewcommand{\theHtable}{appendix.\arabic{table}}
\renewcommand{\theHfigure}{appendix.\arabic{figure}}

\section{Overview of the Appendix}
\label{sec:appendix-overview}
This appendix includes the following supplementary materials:
\begin{itemize}
    \item algorithmic details of TMUR, including the complete training and inference procedure;
    \item dataset descriptions and the dataset-specific hyperparameter settings used in our benchmark;
    \item additional accuracy results, calibration diagnostics, and hyperparameter sensitivity results;
    \item per-dataset routing diagnostics, larger-dataset and multimodal sentiment results, and computational cost;
    \item cross-view calibration plots and supplementary theoretical derivations.
\end{itemize}

\section{Algorithmic Description of TMUR}
Algorithm~\ref{alg:tmur} summarizes the training and inference procedure of TMUR. 

\begin{algorithm}[ht]
    \caption{Training and inference procedure of TMUR.}
    \label{alg:tmur}
    \raggedright
    \textbf{/* TRAIN */}\\
    \textbf{Input}: Multi-view training set $\mathcal{D}$, epochs $T$, hyperparameters $\lambda$, $\beta$, $\gamma$, routing temperature $\tau$.\\
    \textbf{Output}: Parameters of view projectors, experts, and router.
    \begin{algorithmic}[1]
        \State Initialize view projectors $\{P_v\}_{v=1}^{V}$, private experts $\{E_v\}_{v=1}^{V}$, collaborative expert $E_{V+1}$, and unified router $R$.
        \For{epoch $t = 1 : T$}
            \For{mini-batch $\mathcal{B}\subset\mathcal{D}$}
                \State Align each view by $\bm{h}^{(v)} = \mathrm{LN}(P_v(\bm{x}^{(v)}))$ for $v=1,\ldots,V$.
                \State Form the collaborative input $\bm{H} = [\bm{h}^{(1)};\cdots;\bm{h}^{(V)}]$.
                \State Build router context $\bm{g}$ by router-side cross-view attention over $\{\bm{h}^{(v)}\}_{v=1}^{V}$.
                \State Obtain private expert evidence $\bm{e}^{(v)} = \phi(E_v(\bm{h}^{(v)}))$ for $v=1,\ldots,V$.
                \State Obtain collaborative evidence $\bm{e}^{(V+1)} = \phi(E_{V+1}(\bm{H}))$.
                \State Predict sample-wise weights $\bm{\pi}=\mathrm{softmax}(R(\bm{g})/\tau)$.
                \State Aggregate evidence $\bm{e}^{\mathrm{fused}}=\sum_{i=1}^{V+1} \pi_i \bm{e}^{(i)}$ and set $\bm{\alpha}^{\mathrm{fused}}=\bm{e}^{\mathrm{fused}}+\bm{1}$.
                \State Compute fused loss $\mathcal{L}_{\mathrm{fused}}$ and auxiliary expert loss $\mathcal{L}_{\mathrm{view}}$.
                \State Compute load-balancing loss $\mathcal{L}_{\mathrm{bal}}$ and expert-diversity loss $\mathcal{L}_{\mathrm{div}}$.
                \State Update all parameters using
                \[
                \mathcal{L}
                =
                \mathcal{L}_{\mathrm{fused}}
                + \lambda \mathcal{L}_{\mathrm{view}}
                + \beta \mathcal{L}_{\mathrm{bal}}
                + \gamma \mathcal{L}_{\mathrm{div}}.
                \]
            \EndFor
        \EndFor
    \end{algorithmic}
    \textbf{/* TEST */}\\
    For each test sample, repeat the forward pass and return $\bm{p}^{\mathrm{fused}}=\bm{\alpha}^{\mathrm{fused}}/S^{\mathrm{fused}}$, $u^{\mathrm{fused}}=K/S^{\mathrm{fused}}$, and $\arg\max_k p_k^{\mathrm{fused}}$, where $S^{\mathrm{fused}}=\sum_k\alpha_k^{\mathrm{fused}}$.
\end{algorithm}

\section{Dataset Details}
\subsection{Benchmark Statistics}
Table~\ref{tab:appendix-dataset-stats} reports the detailed statistics of all fourteen datasets used in the main paper. We follow the standard processed multi-view releases adopted by prior trusted multi-view studies. For some datasets, such as Caltech101 and CUB, the benchmark release itself uses the commonly adopted subset version rather than the original full dataset, which explains the reduced sample and class counts.

\begin{table*}[!t]
\centering
\small
\caption{Detailed statistics of the fourteen datasets used in the main paper.}
\label{tab:appendix-dataset-stats}
\resizebox{0.7\textwidth}{!}{
\begin{tabular}{lcccl}
\toprule
\textbf{Dataset} & \textbf{\#Samples} & \textbf{\#Views} & \textbf{\#Classes} & \textbf{Per-view Dimensions}\\
\midrule
HandWritten & 2000 & 6 & 10 & \{240, 76, 216, 47, 64, 6\}\\
Scene & 4485 & 3 & 15 & \{20, 59, 40\}\\
LandUse & 2100 & 3 & 21 & \{20, 59, 40\}\\
NUS & 30000 & 5 & 31 & \{65, 226, 145, 74, 129\}\\
Caltech-6V & 2386 & 6 & 20 & \{48, 40, 254, 1984, 512, 928\}\\
PIE & 680 & 3 & 68 & \{484, 256, 279\}\\
WebKB & 203 & 3 & 4 & \{1703, 230, 230\}\\
Caltech & 2908 & 2 & 10 & \{4096, 4096\}\\
UCI & 2000 & 3 & 10 & \{6, 240, 47\}\\
CUB & 600 & 2 & 10 & \{1024, 300\}\\
Animal & 10158 & 2 & 50 & \{4096, 4096\}\\
MSRCV1 & 210 & 6 & 7 & \{1302, 48, 512, 100, 256, 210\}\\
BBC & 685 & 4 & 5 & \{4659, 4633, 4665, 4684\}\\
Leaves & 1600 & 3 & 100 & \{64, 64, 64\}\\
\bottomrule
\end{tabular}}
\end{table*}

\begin{table*}[t]
\centering
\small
\caption{Dataset-specific TMUR hyperparameters used in the benchmark experiments.}
\label{tab:appendix-hparams}
\resizebox{0.55\textwidth}{!}{
\begin{tabular}{lcc|lcc}
\toprule
\textbf{Dataset} & $\bm{\beta}$ & $\bm{\gamma}$ & \textbf{Dataset} & $\bm{\beta}$ & $\bm{\gamma}$\\
\midrule
HandWritten & 0.1 & 0.0 & Scene & 0.0 & 0.2\\
LandUse & 0.05 & 0.0001 & NUS & 0.05 & 0.05\\
Caltech-6V & 0.05 & 0.05 & PIE & 0.02 & 0.02\\
WebKB & 0.05 & 0.01 & Caltech & 0.05 & 0.1\\
UCI & 0.05 & 0.1 & CUB & 0.05 & 0.05\\
Animal & 0.01 & 0.02 & MSRCV1 & 0.05 & 0.01\\
BBC & 0.0 & 0.05 & Leaves & 0.01 & 0.002\\
\bottomrule
\end{tabular}}
\end{table*}

\subsection{Per-dataset View Descriptions}
\noindent\textbf{HandWritten.} HandWritten contains 2000 handwritten numerals from ``0'' to ``9'', with 200 samples per class. Following the standard benchmark release, each sample is represented by six views: pixel averages in $2\times3$ windows, Fourier coefficients of the character shape, profile correlations, Zernike moments, Karhunen--Loeve coefficients, and morphological features.

\noindent\textbf{Scene.} Scene15 contains 4485 indoor and outdoor scene images from 15 categories. Its three views are GIST, PHOG, and LBP descriptors extracted from the same image.

\noindent\textbf{LandUse.} LandUse21 contains 2100 remote-sensing images from 21 land-use categories. In the benchmark release used in this paper, each aerial image is represented by three visual views with dimensions 20, 59, and 40.

\noindent\textbf{NUS.} NUS-WIDE-Object contains 30000 images from 31 classes. Each image is described by five visual views, namely color histogram, block-wise color moments, color correlogram, edge direction histogram, and wavelet texture.

\noindent\textbf{Caltech-6V.} Caltech101-6V is the standard six-view benchmark subset derived from Caltech101. Each image is represented by six complementary visual views with dimensions 48, 40, 254, 1984, 512, and 928.

\noindent\textbf{PIE.} PIE contains 680 face images from 68 subjects. The three views are intensity features, LBP features, and Gabor features extracted from the same face image.

\noindent\textbf{WebKB.} WebKB contains 203 webpages from four categories. Each page is described by three text views: the page content, the anchor text of hyperlinks, and the title text.

\noindent\textbf{Caltech.} The 2-view Caltech101 benchmark selects the top 10 categories from the original 101 classes and uses DECAF and VGG19 to extract two 4096-dimensional image views.

\noindent\textbf{UCI.} UCI contains handwritten numerals from 10 classes. The three views are the average of pixels in 240 windows, 47 Zernike moments, and 6 morphological features.

\noindent\textbf{CUB.} CUB is constructed from the first 10 bird categories of the original dataset. Following the standard benchmark setting, GoogleNet features are used as the image view and doc2vec features are used as the text view.

\noindent\textbf{Animal.} Animal contains 10158 images from 50 animal categories. The two views are deep image features extracted by DECAF and VGG19.

\noindent\textbf{MSRCV1.} MSRCV1 contains 210 images from 7 classes. The processed version used in this paper provides six visual views for each image, with dimensions 1302, 48, 512, 100, 256, and 210.

\noindent\textbf{BBC.} BBC contains 685 news documents from 5 categories. Each document is represented by four text views in the standard multi-view benchmark release.

\noindent\textbf{Leaves.} Leaves100 contains 1600 leaf samples from 100 plant species. Its three views correspond to shape descriptors, fine-scale edge features, and texture histograms.

\section{Additional Experimental Details}
\subsection{Training Protocol}
The fourteen-dataset benchmark uses 300 training epochs and five fixed random seeds $\{3407, 7, 601, 101, 503\}$. The train/test partitions follow the stratified 80/20 split implemented in the released code. Optimizer, learning-rate schedule, batch size, and auxiliary-supervision weight follow Section~5.1. The calibrated/normalized local-fusion comparison in Section~5.3 uses its separately specified 70/10/20 train/validation/test split.

\subsection{Dataset-specific Hyperparameter Settings}
Table~\ref{tab:appendix-hparams} lists the dataset-specific load-balancing weight $\beta$ and diversity weight $\gamma$ for the benchmark.

\subsection{Calibration Metrics}
For a test sample $n$, let $a_n=\mathbb{1}[\arg\max_k p_{n,k}=y_n]$. Probability ECE uses the confidence $c_n=\max_k p_{n,k}$, whereas uncertainty ECE (U-ECE) uses $c_n=1-u_n$, with $u_n=K/\sum_k\alpha_{n,k}$ derived from the Dirichlet opinion. For equal-width bins $B_1,\ldots,B_M$ of $c_n$, the calibration error is
\begin{equation}
\mathrm{ECE}=\sum_{m:|B_m|>0}\frac{|B_m|}{N}
\left|\frac{1}{|B_m|}\sum_{n\in B_m}a_n-
\frac{1}{|B_m|}\sum_{n\in B_m}c_n\right|.
\end{equation}
The ECE comparison in Table~\ref{tab:calibration_ece} uses uncertainty-derived confidence. The archived diagnostic runs summarized below use $M=15$ equal-width bins. Accuracy and ECE are expressed as percentages; mean uncertainty is on the $[0,1]$ scale.

\begin{table*}[t]
\centering
\small
\caption{Supplementary five-seed accuracy results on the remaining seven datasets that are omitted from the main-paper benchmark table for space. Each entry reports mean $\pm$ standard deviation over five seeds.}
\label{tab:appendix-acc-back}
\resizebox{\textwidth}{!}{
\begin{tabular}{l|l|ccccccc}
\toprule
\textbf{Method} & \textbf{Ref.} & \textbf{Caltech} & \textbf{UCI} & \textbf{CUB} & \textbf{Animal} & \textbf{MSRCV1} & \textbf{BBC} & \textbf{Leaves} \\
\midrule
TMC~\cite{han2021trusted} & ICLR'21 & 99.52$\pm$0.07 & 97.90$\pm$1.04 & 92.83$\pm$1.87 & 87.48$\pm$0.28 & 95.24$\pm$3.69 & 94.45$\pm$1.19 & 96.13$\pm$0.81 \\
TMDLO~\cite{liu2022trusted} & AAAI'22 & 83.37$\pm$8.99 & 91.60$\pm$6.41 & 85.17$\pm$7.18 & 75.37$\pm$2.01 & 95.24$\pm$3.37 & 94.16$\pm$1.85 & 70.25$\pm$2.18 \\
ETMC~\cite{han2023dynamic} & PAMI'23 & 99.59$\pm$0.14 & 98.10$\pm$0.75 & 93.67$\pm$1.72 & \underline{88.30$\pm$0.43} & 98.10$\pm$1.78 & 94.31$\pm$1.49 & 98.88$\pm$0.25 \\
RCML~\cite{xu2024reliable} & AAAI'24 & 96.12$\pm$0.84 & 98.35$\pm$0.64 & 93.00$\pm$1.72 & 87.57$\pm$0.53 & 90.95$\pm$1.78 & 93.43$\pm$1.22 & 95.63$\pm$0.97 \\
ETF~\cite{lu2025navigating} & ICML'25 & 99.11$\pm$0.48 & 96.90$\pm$1.11 & 92.33$\pm$2.07 & \textbf{88.40$\pm$0.29} & 93.81$\pm$1.17 & 92.12$\pm$1.26 & 42.94$\pm$7.41 \\
FUML~\cite{duan2025fuml} & ICML'25 & \underline{99.76$\pm$0.08} & 98.80$\pm$0.90 & 93.33$\pm$1.90 & 87.21$\pm$1.06 & 98.10$\pm$2.33 & 79.71$\pm$13.28 & \textbf{99.88$\pm$0.15} \\
TMCEK~\cite{liang2025expert} & ICML'25 & 74.19$\pm$16.97 & 98.75$\pm$0.65 & 94.00$\pm$1.93 & 88.19$\pm$0.53 & 91.43$\pm$2.43 & 96.35$\pm$0.00 & 97.81$\pm$0.66 \\
TUNED~\cite{huang2025trusted} & AAAI'25 & 99.59$\pm$0.18 & 98.90$\pm$0.78 & 94.67$\pm$1.25 & 82.41$\pm$1.33 & \underline{99.05$\pm$1.17} & \underline{96.93$\pm$1.17} & 94.25$\pm$1.00 \\
TEF~\cite{liang2025evolutionary} & ICLR'25 & \underline{99.76$\pm$0.08} & \underline{99.00$\pm$0.57} & \underline{94.83$\pm$1.33} & 87.78$\pm$0.53 & \underline{99.05$\pm$1.17} & 96.79$\pm$0.99 & 89.69$\pm$1.15 \\
SAEML~\cite{xu2025beyond} & MM'25 & 87.18$\pm$5.11 & 98.90$\pm$0.51 & 93.83$\pm$1.55 & 87.42$\pm$0.51 & \textbf{99.52$\pm$0.95} & \textbf{97.23$\pm$0.55} & 97.75$\pm$0.85 \\
RTMC~\cite{zhou2025refining} & WWW'25 & \textbf{99.79$\pm$0.13} & 96.15$\pm$1.32 & 94.00$\pm$0.82 & 85.80$\pm$0.94 & 80.95$\pm$6.02 & 93.14$\pm$1.76 & 85.94$\pm$2.55 \\
RCMCL~\cite{hu2026robust} & PAMI'26 & 91.92$\pm$0.52 & 97.40$\pm$1.10 & 94.17$\pm$1.83 & 87.97$\pm$0.38 & 97.14$\pm$2.33 & 94.16$\pm$1.31 & 97.12$\pm$0.61 \\
\midrule
NLC~\cite{xu2025noisy} & AAAI'25 & 98.49$\pm$1.28 & 98.50$\pm$0.74 & 87.83$\pm$2.92 & 85.92$\pm$0.56 & 70.48$\pm$6.14 & 78.39$\pm$5.47 & 99.44$\pm$0.41 \\
MAMC~\cite{lin2025enhance} & ICLR'25 & 95.22$\pm$0.61 & 98.80$\pm$0.58 & 94.50$\pm$0.67 & 86.23$\pm$0.73 & 93.81$\pm$5.13 & 95.47$\pm$1.49 & 99.56$\pm$0.15 \\
BCM~\cite{lan2025BCM} & MM'25 & 99.52$\pm$0.25 & \textbf{99.10$\pm$0.46} & 93.83$\pm$0.85 & 87.20$\pm$0.64 & 98.10$\pm$2.33 & 91.82$\pm$0.72 & \underline{99.69$\pm$0.34} \\
\midrule
\rowcolor[RGB]{234, 238, 234}
TMUR & Ours & \textbf{99.79$\pm$0.17} & 98.90$\pm$0.46 & \textbf{95.83$\pm$1.75} & 87.83$\pm$0.57 & \textbf{99.52$\pm$0.95} & 95.04$\pm$1.26 & \underline{99.69$\pm$0.20} \\
\bottomrule
\end{tabular}}
\end{table*}

\begin{table}[t]
\centering
\small
\caption{TMUR diagnostics from the archived five-seed calibration runs. Accuracy and both ECE metrics are percentages; $\overline{u}$ denotes mean uncertainty.}
\label{tab:appendix-ece-detail}
\setlength{\tabcolsep}{4pt}
\begin{tabular}{lrrrr}
\toprule
\textbf{Dataset} & \textbf{Acc.} & \textbf{Prob-ECE} & \textbf{U-ECE} & $\overline{u}$\\
\midrule
HandWritten & 99.05 & 6.19 & 5.36 & 0.059\\
Scene & 82.36 & 5.67 & 6.87 & 0.131\\
LandUse & 79.14 & 14.32 & 7.35 & 0.254\\
Caltech-6V & 96.65 & 10.47 & 7.55 & 0.104\\
PIE & 97.65 & 58.81 & 52.50 & 0.547\\
WebKB & 88.29 & 16.74 & 14.01 & 0.220\\
Animal & 87.81 & 14.34 & 9.79 & 0.192\\
Leaves & 99.69 & 52.15 & 44.43 & 0.447\\
\bottomrule
\end{tabular}
\end{table}

\section{Supplementary Results}
\subsection{Accuracy on the Remaining Seven Datasets}
To keep the main paper readable, Table~\ref{tab:appendix-acc-back} reports the remaining seven accuracy columns that are not shown in Table~1 of the main paper. The same five-seed evaluation protocol and baseline grouping are used here.

TMUR achieves the best or tied-best mean accuracy on Caltech, CUB, and MSRCV1. On CUB, its 95.83\% accuracy exceeds the strongest compared baseline by 1.00 percentage point. On Leaves, it ties BCM for the second-best mean accuracy, below FUML. The table also retains the results on UCI, Animal, and BBC, where TMUR does not obtain the highest mean accuracy.

\subsection{Detailed Calibration Summary for TMUR}
Table~\ref{tab:appendix-ece-detail} reports probability ECE, uncertainty ECE, mean predictive uncertainty, and the corresponding run-specific accuracy for the eight calibration datasets. All quantities in this table are retained from the same archived calibration runs, enabling direct comparison of the two ECE metrics. These run-specific diagnostics supplement the published benchmark comparison in Table~\ref{tab:calibration_ece}.

PIE and Leaves have high accuracy but large probability ECE and mean uncertainty. This illustrates that predictive correctness and calibration measure different properties; strong accuracy alone does not ensure calibrated predictions.

\subsection{Additional Hyperparameter Sensitivity}
Figure~\ref{fig:hyperSensitiveAppendix} reports eight supplementary sensitivity surfaces, and Figure~\ref{fig:hyperSensitiveAdditionalTwo} reports NUS and Animal. Together with the four datasets in Figure~\ref{fig:hyperSensitive}, these results cover all fourteen benchmark datasets. For MSRCV1, accuracy remains close to 1.0 across the displayed range. Accuracy is shown as a percentage, with dataset-specific axis ranges.

\begin{figure*}[!tp]
\centering
\includegraphics[width=\textwidth]{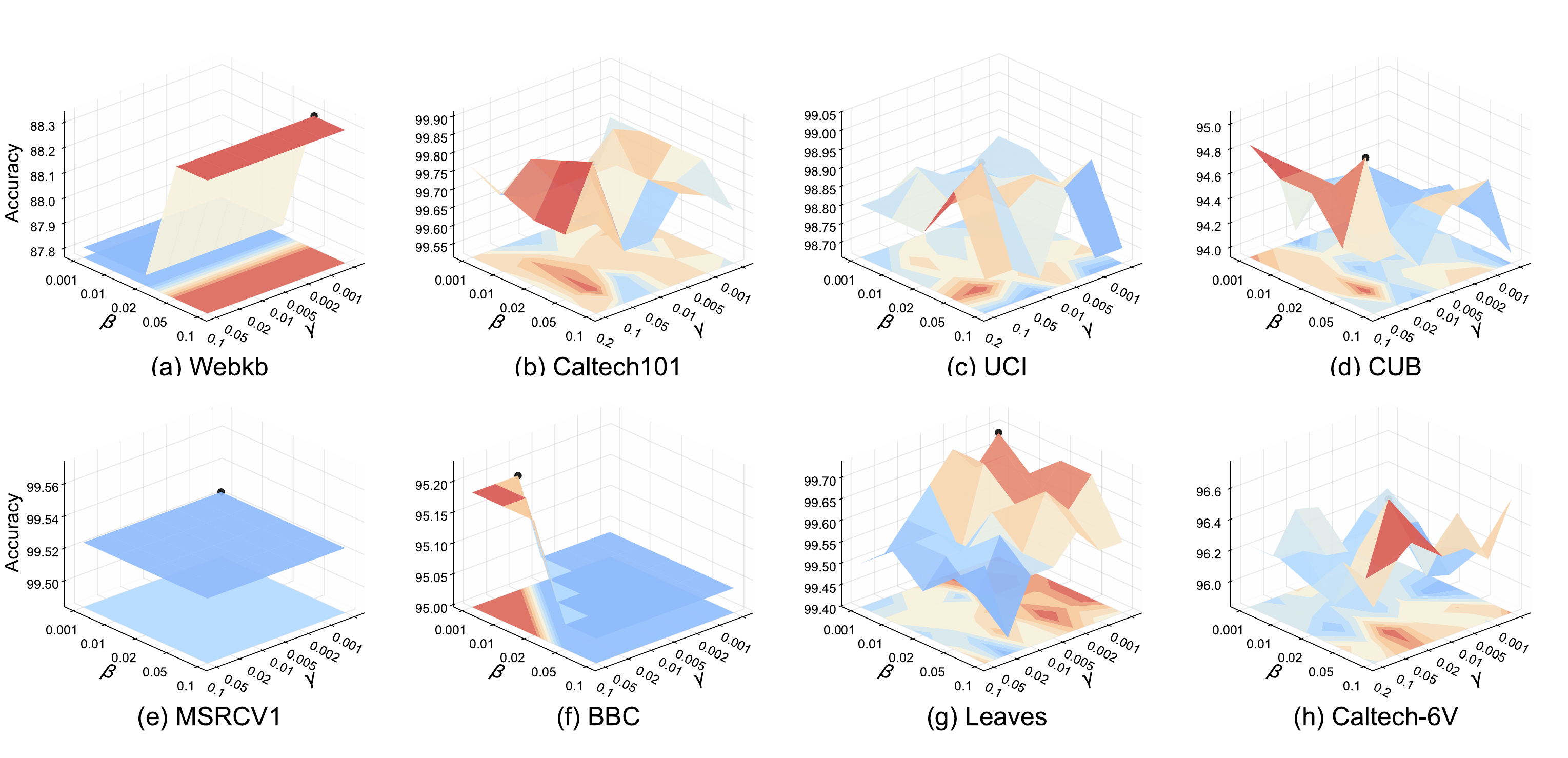}
\caption{Joint $\beta$--$\gamma$ sensitivity on eight additional datasets: WebKB, Caltech, UCI, CUB, MSRCV1, BBC, Leaves, and Caltech-6V. Each subplot shows five-seed accuracy under the corresponding two-dimensional sweep. Caltech101 denotes the two-view Caltech dataset.}
\Description{Eight original sensitivity surfaces arranged in two rows and four columns. The top row contains WebKB, Caltech, UCI, and CUB; the bottom row contains MSRCV1, BBC, Leaves, and Caltech-6V.}
\label{fig:hyperSensitiveAppendix}
\end{figure*}

\begin{figure}[t]
\centering
\includegraphics[width=\columnwidth]{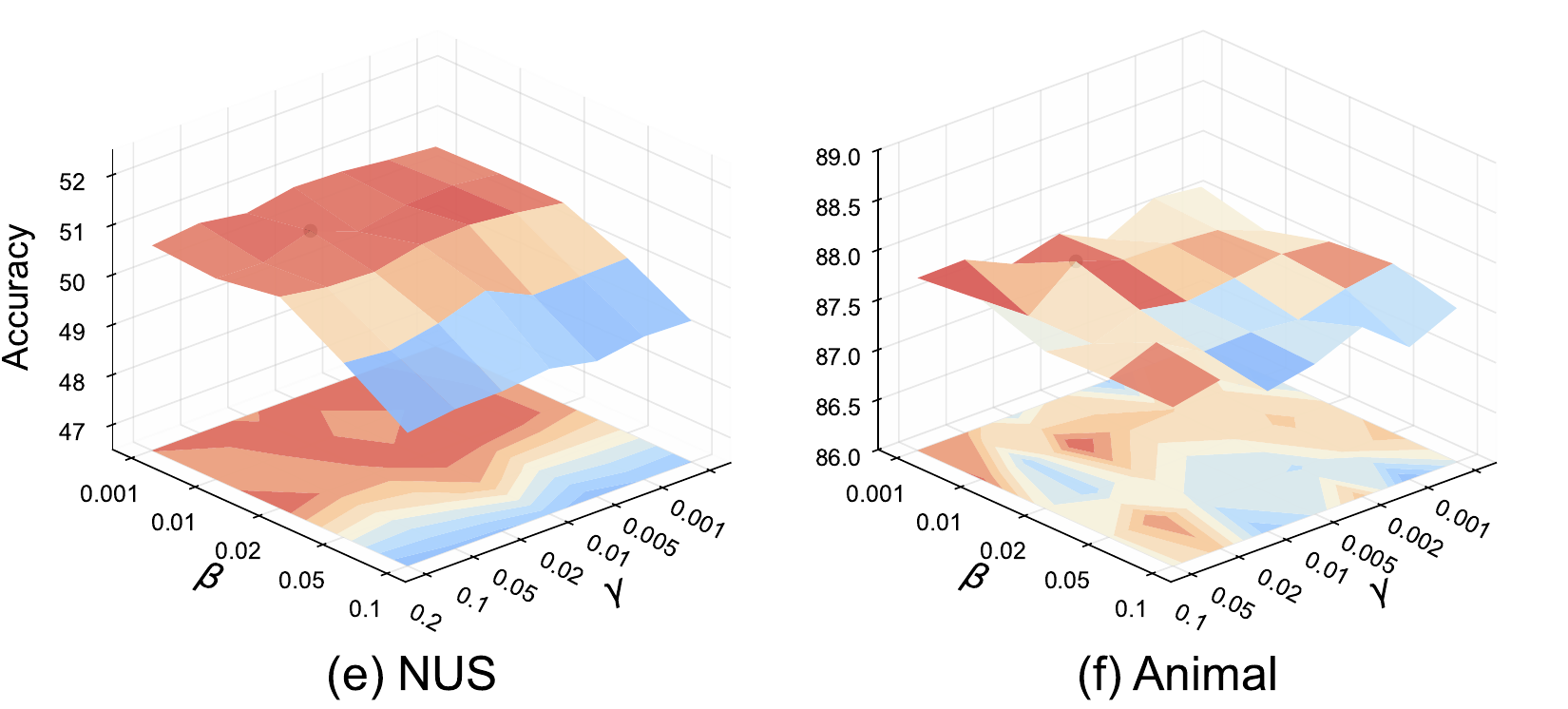}
\caption{Additional joint sensitivity surfaces for NUS and Animal, preserving panels (e) and (f) from the original six-dataset sweep.}
\Description{Two original accuracy surfaces show the sensitivity to beta and gamma on NUS and Animal.}
\label{fig:hyperSensitiveAdditionalTwo}
\end{figure}

\subsection{Per-dataset Router--Reliability Alignment}
Section~5.3 reports the average alignment between routing weights and expert reliability. Table~\ref{tab:appendix-router-behavior} provides the corresponding per-dataset results under the same clean-trained setting, averaged over five seeds.

For sample $n$ and expert $i$, let $\pi_{n,i}$ be its routing weight and $a_{n,i}$ indicate whether its standalone prediction is correct. The pooled statistic $\rho_s$ is the Pearson correlation between $\pi_{n,i}$ and $a_{n,i}$ over all $N(V+1)$ sample--expert pairs. At the expert level, define mean utilization $\bar\pi_i=N^{-1}\sum_n\pi_{n,i}$ and standalone accuracy $A_i=N^{-1}\sum_n a_{n,i}$. The statistic $\rho_e$ is the Pearson correlation between $\bar\pi_i$ and $A_i$ across the $V+1$ experts; Spearman measures the same relation after ranking the two quantities. Each statistic is computed within a run before averaging over seeds.

\begin{table}[ht]
\centering
\small
\caption{Per-dataset router diagnostics averaged over five seeds.}
\label{tab:appendix-router-behavior}
\setlength{\tabcolsep}{10pt}
\begin{tabular}{lrrr}
\toprule
Dataset & $\rho_s$ & $\rho_e$ & Spearman \\
\midrule
Caltech-6V & 0.145 & 0.756 & 0.827 \\
LandUse & 0.216 & 0.863 & 0.800 \\
Scene & 0.162 & 0.744 & 0.600 \\
WebKB & 0.191 & 0.742 & 0.950 \\
\bottomrule
\end{tabular}
\end{table}

The pooled correlations are positive on all four datasets, and the expert-level correlations indicate that experts with higher overall accuracy tend to receive greater mean utilization. These are descriptive associations: the pooled statistic combines within-expert and between-expert variation and does not establish correct expert selection for every individual sample.

\subsection{Evaluation on Larger Multi-view Datasets}
We evaluate four methods on YouTubeFace and MVoxCeleb (Table~\ref{tab:appendix-large-datasets}). TMUR obtains the highest mean accuracy on both datasets. Relative to ETMC, the strongest compared baseline on each dataset, the gains computed from the displayed means are 0.93 percentage point on YouTubeFace and 11.28 percentage points on MVoxCeleb.

\begin{table}[ht]
\centering
\small
\caption{Supplementary accuracy (\%) on larger multi-view datasets.}
\label{tab:appendix-large-datasets}
\setlength{\tabcolsep}{4pt}
\begin{tabular}{lcc}
\toprule
Method & YouTubeFace & MVoxCeleb \\
\midrule
TMC & $84.79\pm0.03$ & $73.55\pm0.08$ \\
ETMC & $86.89\pm0.10$ & $73.58\pm0.04$ \\
RCML & $83.05\pm0.23$ & $69.50\pm0.17$ \\
\rowcolor[RGB]{234,238,234}
TMUR & $\mathbf{87.82\pm0.09}$ & $\mathbf{84.86\pm0.15}$ \\
\bottomrule
\end{tabular}
\end{table}

\subsection{Multimodal Sentiment Classification}
To examine a task beyond the fourteen-dataset benchmark, we also evaluate TMUR on CMU-MOSI, which contains text, audio, and visual modalities. The supplementary evaluation uses the official train/validation/test split and pooled temporal features. Given class imbalance, the reported setting for each method is oriented toward Macro-F1; Table~\ref{tab:appendix-mosi} presents its corresponding test accuracy. TMUR exceeds the reported FUML and ETMC results on both tasks; the absolute gains over FUML are 1.16 and 3.38 percentage points, respectively. These comparisons concern the pooled-feature setting rather than models that explicitly learn temporal interactions.

\begin{table}[ht]
\centering
\small
\caption{Supplementary CMU-MOSI accuracy (\%) using pooled temporal features.}
\label{tab:appendix-mosi}
\setlength{\tabcolsep}{12pt}
\begin{tabular}{lrr}
\toprule
Method & Three-class & Seven-class \\
\midrule
ETMC & 67.00 & 28.80 \\
FUML & 68.02 & 29.62 \\
\rowcolor[RGB]{234,238,234}
TMUR & \textbf{69.18} & \textbf{33.00} \\
\bottomrule
\end{tabular}
\end{table}

\subsection{Computational Cost}
We compare inference cost with representative evidential multi-view methods. Results in Table~\ref{tab:appendix-efficiency} are averaged over fourteen datasets, with each model instantiated using the corresponding view dimensions and class count. Parameters count trainable weights, and FLOPs are forward-pass estimates from \texttt{torch.profiler}. Latency measures inference-only wall-clock time for one batch of 128 samples on an RTX 4090, in evaluation mode with gradient computation disabled. CUDA synchronization is used, with 20 warm-up runs and 80 timed repeats.

\begin{table}[ht]
\centering
\small
\caption{Inference cost averaged over fourteen datasets.}
\label{tab:appendix-efficiency}
\setlength{\tabcolsep}{5pt}
\begin{tabular}{lrrr}
\toprule
Method & Params & FLOPs/sample & Latency/batch \\
       & (M) & (G) & (ms) \\
\midrule
TMC & 3.582 & 0.0072 & 0.944 \\
ETMC & 7.101 & 0.0142 & 1.260 \\
RCML & 3.582 & 0.0072 & 0.240 \\
RCMCL & 0.059 & 0.0001 & 22.704 \\
\rowcolor[RGB]{234,238,234}
TMUR & 2.394 & 0.0092 & 1.673 \\
\bottomrule
\end{tabular}
\end{table}

TMUR has fewer trainable parameters than TMC, ETMC, and RCML. RCMCL has the smallest parameter count and reported FLOP estimate, but the largest measured latency. TMUR's 1.673\,ms latency is higher than those of TMC, ETMC, and RCML, reflecting the additional routing computation. The profiler estimates and measured timings characterize different aspects of cost; a smaller parameter count or FLOP estimate does not necessarily imply lower wall-clock latency.

\subsection{Exploratory Architectural Variants}
We also explored top-2 routed fusion and a shared expert-pool variant. The top-2 experiment showed almost no accuracy change, indicating that concentrating fusion on a small number of selected experts can preserve predictive performance in the tested setting. This observation alone does not establish an inference speedup: the archived sparse implementation incurred additional dispatch overhead and was slower than dense inference.

In the shared-pool variant, experts are not tied to individual views, and aligned samples from different views may select the same experts. This exploratory variant remained competitive with strong existing methods. A systematic evaluation of sparse execution and shared expert pools is left for future work; the main-paper method uses the view-private and collaborative experts described in Section~3.

\subsection{Motivation Analysis via Cross-view Calibration Mismatch}
Figure~\ref{fig:motivation_calibration_appendix} provides additional evidence for the motivation of TMUR using four representative datasets: HandWritten, UCI, NUS, and Caltech-6V. These plots are produced from the saved calibration outputs of TMC and RCML. In each dataset block, the left column shows the uncertainty density of different views, the middle column shows the reliability curves obtained by binning the maximum predicted probability, and the right column shows the reliability curves obtained by binning predictive uncertainty. The first row corresponds to TMC and the second row corresponds to RCML.

Two observations are consistent across these datasets. First, the uncertainty distributions differ substantially from one view to another, even within the same sample population. Second, for both confidence-based and uncertainty-based binning, the same confidence or uncertainty range can lead to clearly different empirical accuracies across views. Therefore, branch-local confidence or uncertainty is not directly comparable across heterogeneous views, and using it as the fusion authority can introduce systematic mismatch. This observation supports the central design choice of TMUR: instead of letting each branch self-assign fusion importance according to its own evidence scale, we use a unified router to infer cross-view fusion weights from the joint multi-view context.

\begin{figure*}[t]
\centering
\begin{minipage}[t]{0.47\textwidth}
\centering
\textbf{HandWritten}\\[2pt]
\includegraphics[width=\linewidth]{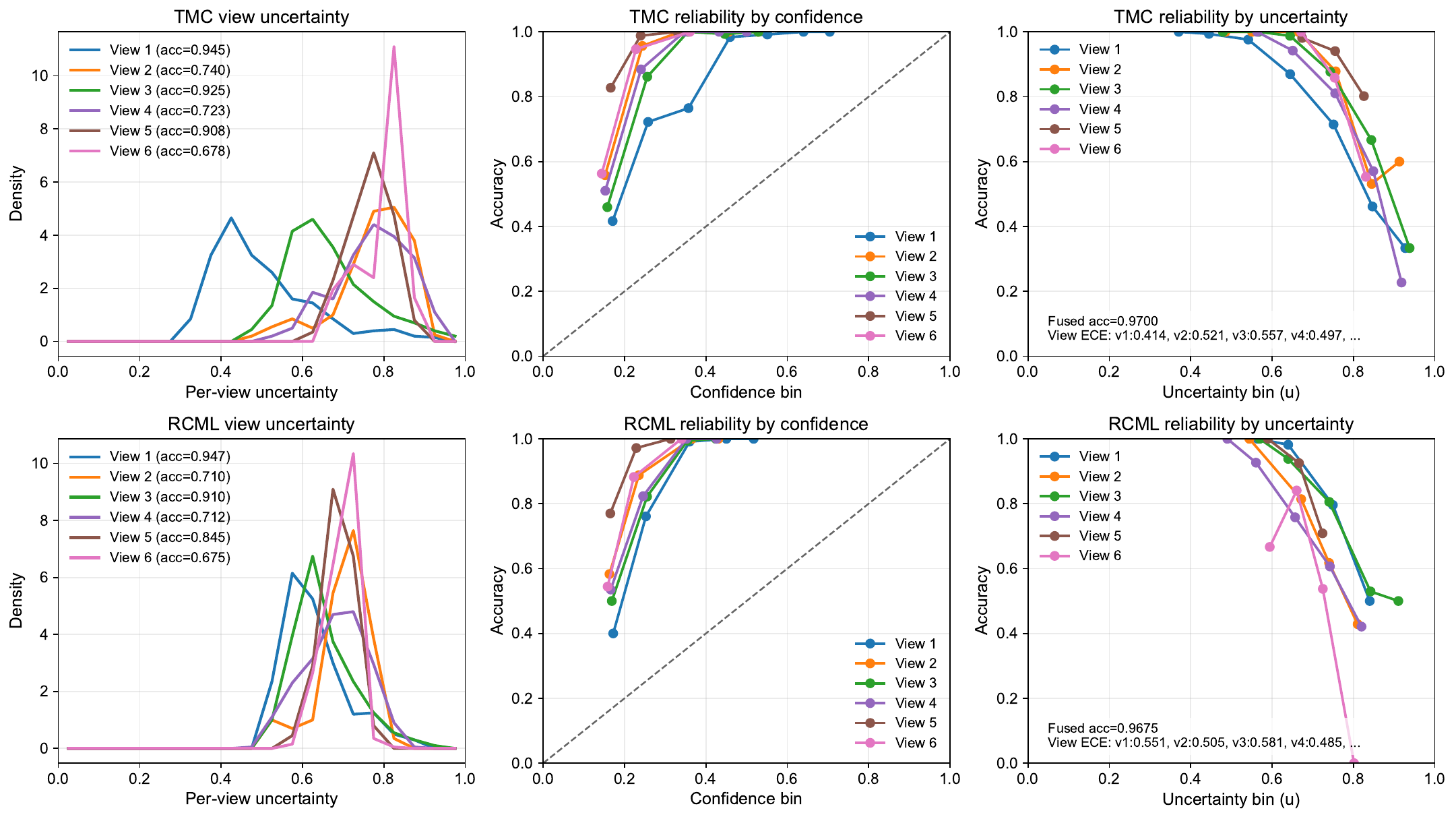}
\end{minipage}\hfill
\begin{minipage}[t]{0.47\textwidth}
\centering
\textbf{UCI}\\[2pt]
\includegraphics[width=\linewidth]{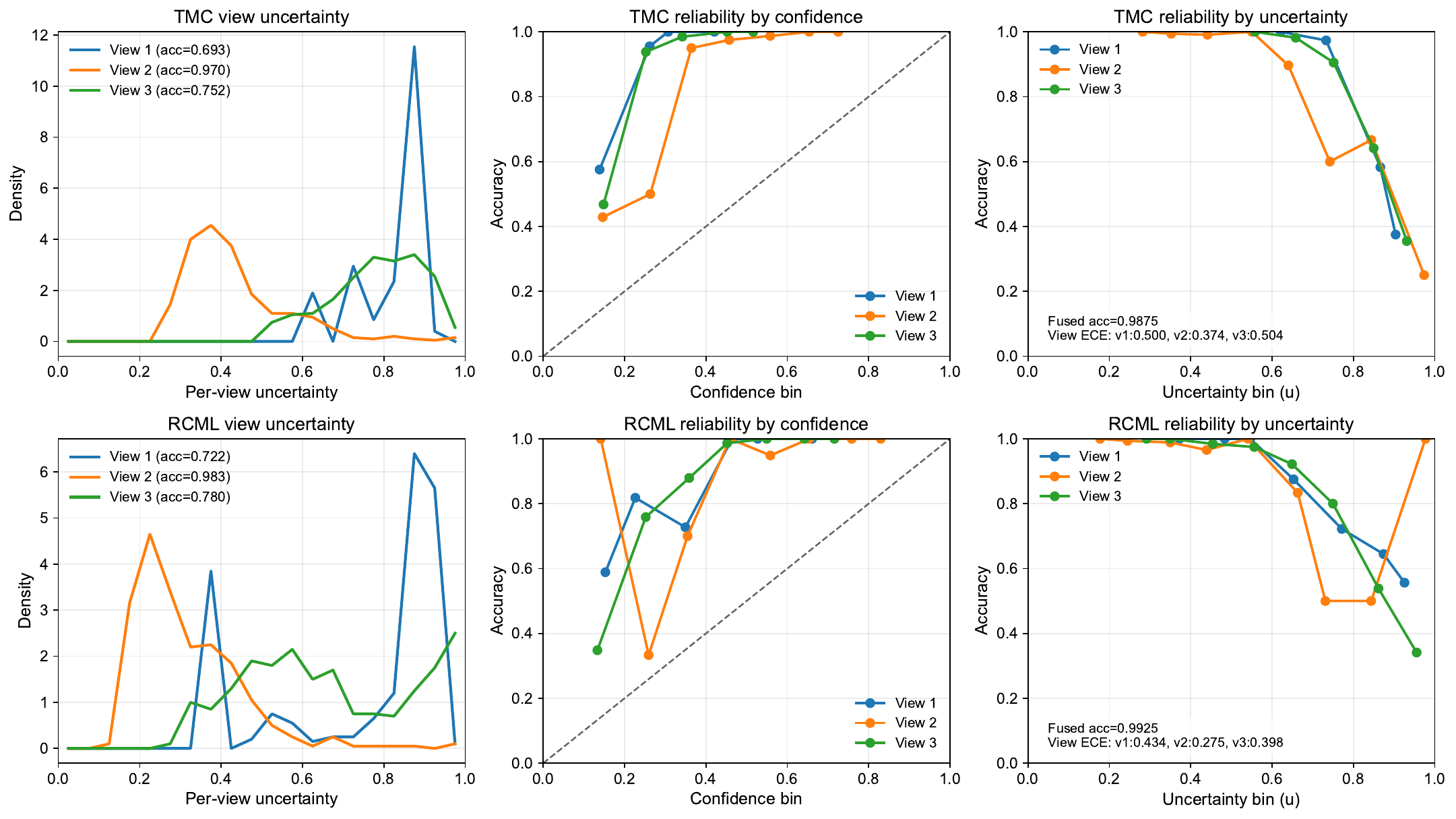}
\end{minipage}

\vspace{0.6em}
\begin{minipage}[t]{0.47\textwidth}
\centering
\textbf{NUS}\\[2pt]
\includegraphics[width=\linewidth]{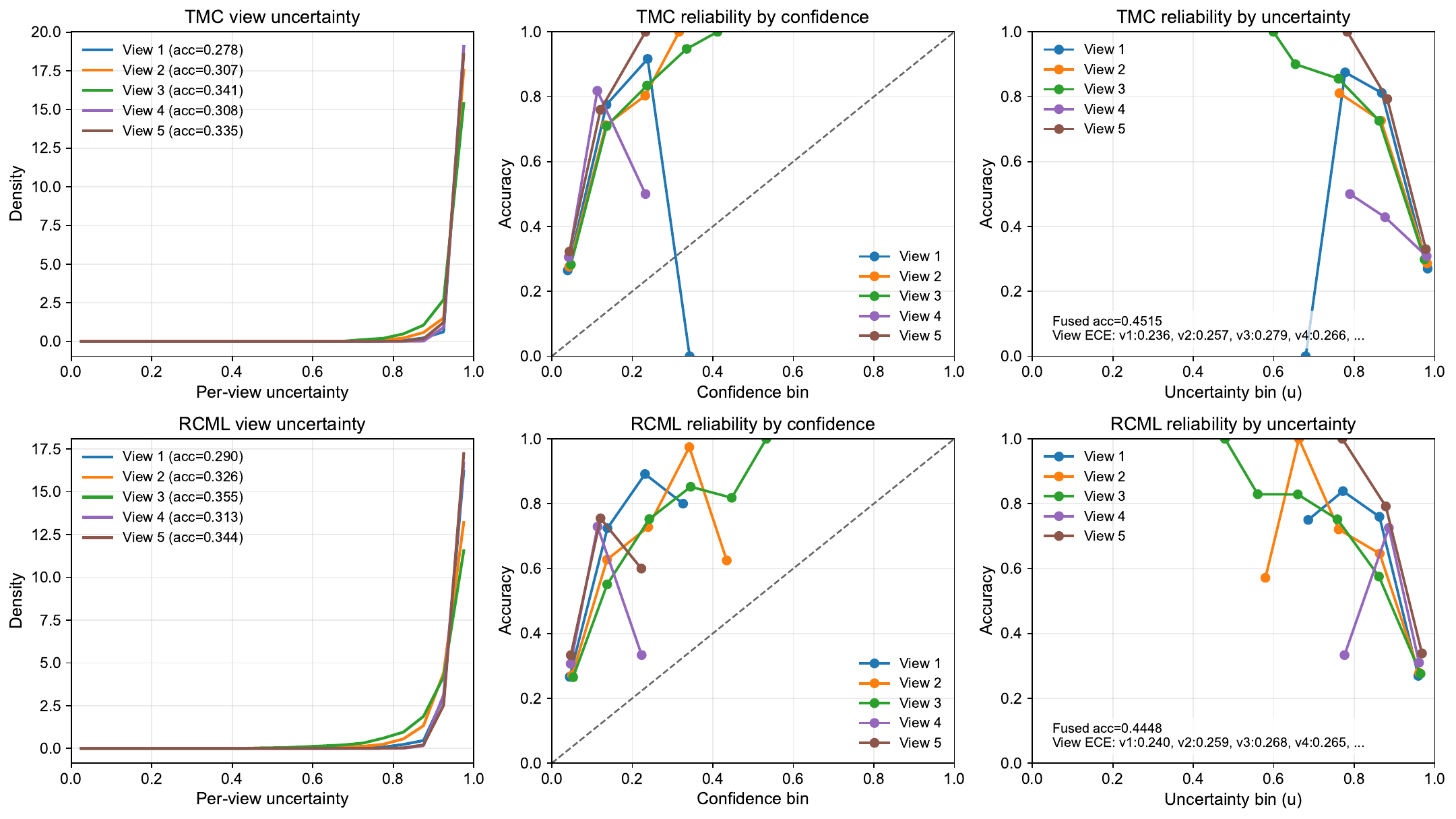}
\end{minipage}\hfill
\begin{minipage}[t]{0.47\textwidth}
\centering
\textbf{Caltech-6V}\\[2pt]
\includegraphics[width=\linewidth]{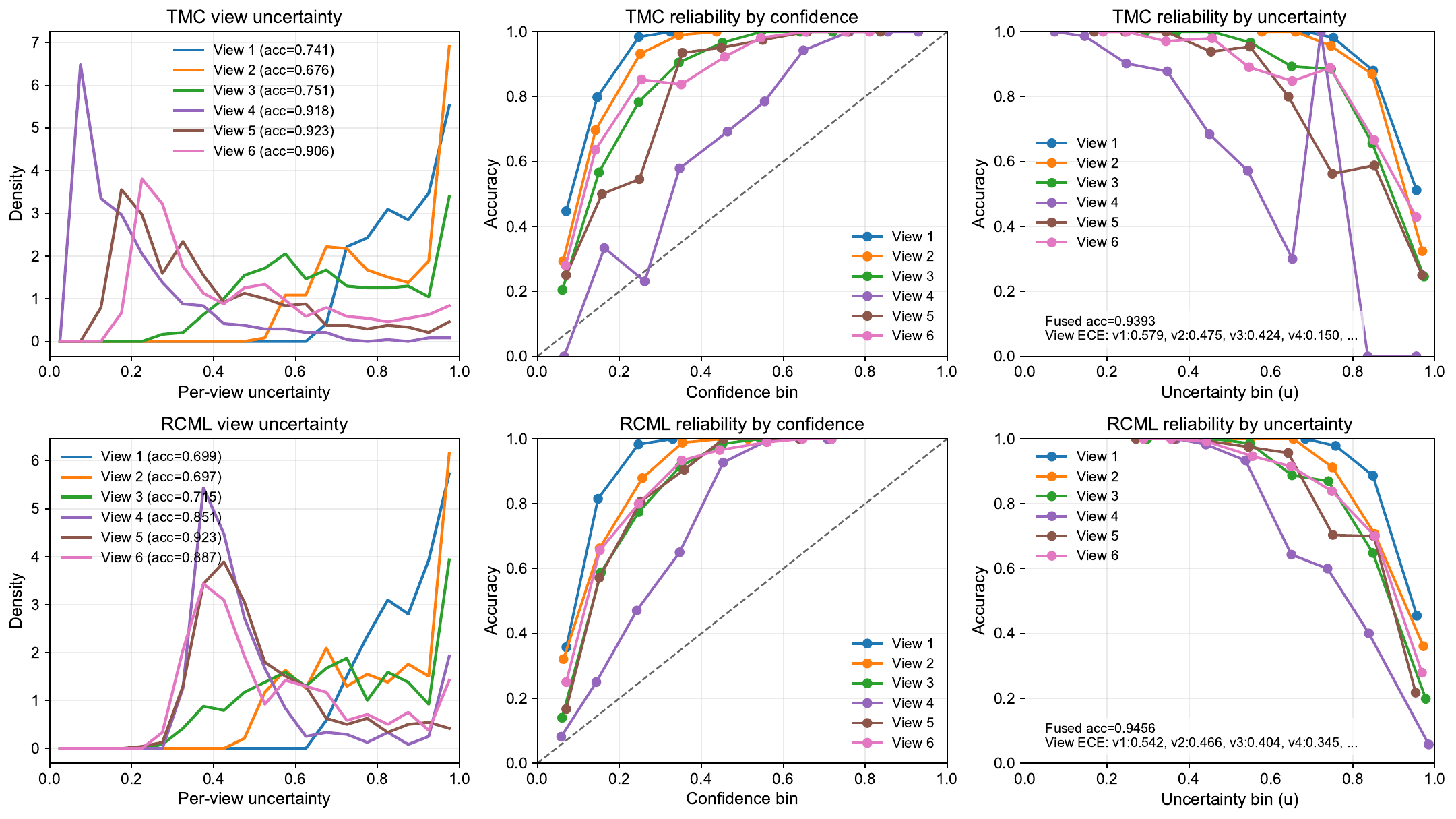}
\end{minipage}

\caption{Motivation analysis using saved calibration results of TMC and RCML on four representative datasets. Within each dataset block, the left column shows per-view uncertainty density distributions, the middle column shows reliability curves based on maximum predicted probability, and the right column shows reliability curves based on predictive uncertainty. The first row is TMC and the second row is RCML. Across datasets, different views exhibit noticeably different calibration behavior, indicating that the same confidence or uncertainty value does not imply the same reliability across views.}
\Description{Four dataset blocks in a two-by-two layout: HandWritten and UCI above NUS and Caltech-6V. Each block preserves three diagnostics for TMC and RCML in two rows and three columns.}
\label{fig:motivation_calibration_appendix}
\end{figure*}

\clearpage
\balance
\section{Supplementary Theoretical Derivations}
\subsection{True-class Probability in Theorem~\ref{thm:scale_bias}}
With the notation of Theorem~\ref{thm:scale_bias}, the remaining true-class probability calculation is
\begin{equation}
\begin{aligned}
p_y(t)&=\frac{1+t r_y}{K+tR},\\
\frac{d\,p_y(t)}{dt}&=\frac{K r_y-R}{(K+tR)^2}.
\end{aligned}
\end{equation}
Thus $r_y>R/K$ implies $d\,p_y(t)/dt>0$, although the support direction is unchanged. The strict uncertainty decrease established in the main paper concerns $R>0$; for the degenerate zero-evidence pattern, $u(t)=1$ is constant.

\subsection{Full Proof of Theorem~\ref{thm:router_advantage}}

We first use the following standard lemma.

\begin{lemma}
For any square-integrable random vector $\bm y$ and statistic $\bm s$, the best $L_2$ predictor among all measurable functions of $\bm s$ is the conditional expectation $\mathbb{E}[\bm y\mid \bm s]$, and the minimum squared error is
\begin{equation}
\inf_f \mathbb{E}\!\left[\|\bm y-f(\bm s)\|_2^2\right]
=
\mathbb{E}\!\left[
\|\bm y-\mathbb{E}[\bm y\mid \bm s]\|_2^2
\right].
\end{equation}
\end{lemma}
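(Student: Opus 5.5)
The plan is the standard orthogonal (Pythagorean) decomposition of the mean squared error in $L_2$. Write $\bm m(\bm s):=\mathbb{E}[\bm y\mid \bm s]$; it is well defined and square-integrable because $\bm y$ is. For an arbitrary measurable $f$ with $\mathbb{E}\|f(\bm s)\|_2^2<\infty$, insert and subtract $\bm m(\bm s)$:
\begin{equation}
\|\bm y-f(\bm s)\|_2^2=\|\bm y-\bm m(\bm s)\|_2^2+\|\bm m(\bm s)-f(\bm s)\|_2^2+2\,(\bm y-\bm m(\bm s))^\top(\bm m(\bm s)-f(\bm s)).
\end{equation}
If $f(\bm s)$ is not square-integrable, the left-hand expectation is already $+\infty$ (by $\|a-b\|^2\ge \tfrac12\|b\|^2-\|a\|^2$), so such $f$ can be discarded at the outset.

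The key step is to show that the cross term has zero expectation. Set $\bm d(\bm s):=\bm m(\bm s)-f(\bm s)$, which is $\sigma(\bm s)$-measurable and square-integrable. By Cauchy--Schwarz the product $(\bm y-\bm m)^\top \bm d$ is integrable, so the tower property applies:
\begin{equation}
\mathbb{E}\big[(\bm y-\bm m(\bm s))^\top \bm d(\bm s)\big]=\mathbb{E}\big[\,\mathbb{E}[\bm y-\bm m(\bm s)\mid \bm s]^\top \bm d(\bm s)\big]=0.
\end{equation}
This uses that $\bm d(\bm s)$ can be pulled out of the conditional expectation and that $\mathbb{E}[\bm y\mid\bm s]-\bm m(\bm s)=0$ almost surely. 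The integrability check here is the only real obstacle; once it is in place, the step is mechanical and can be done coordinatewise.

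Taking expectations in the decomposition then gives
\begin{equation}
\mathbb{E}\|\bm y-f(\bm s)\|_2^2=\mathbb{E}\|\bm y-\bm m(\bm s)\|_2^2+\mathbb{E}\|\bm m(\bm s)-f(\bm s)\|_2^2\ \ge\ \mathbb{E}\|\bm y-\bm m(\bm s)\|_2^2 .
\end{equation}
Equality holds exactly when $f(\bm s)=\bm m(\bm s)$ almost surely. The choice $f=\bm m$ is itself admissible, so the infimum is attained at the conditional expectation and equals the stated residual. Finally, I will note that this residual can be rewritten as $\mathbb{E}[\mathrm{tr}\,\mathrm{Var}(\bm y\mid\bm s)]$, which is the form in which $\mathbb{E}[\mathrm{Var}(\bm w^\star(x)\mid\bm s(x))]$ enters Theorem~\ref{thm:router_advantage}.
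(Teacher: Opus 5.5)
Your proposal is correct and takes the same route as the paper: insert $\mathbb{E}[\bm y\mid\bm s]$, expand the squared norm, and eliminate the cross term using $\mathbb{E}[\bm y-\mathbb{E}[\bm y\mid\bm s]\mid\bm s]=\bm 0$. You also discard non-square-integrable $f$ and use Cauchy--Schwarz to show the cross term is integrable; the paper's proof leaves these details implicit, since it restricts to square-integrable $f$ without justifying that restriction.
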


\begin{proof}
For any square-integrable measurable $f(\bm s)$,
\begin{equation}
\begin{aligned}
\bm y-f(\bm s)
&=\big(\bm y-\mathbb{E}[\bm y\mid \bm s]\big)\\
&\quad+\big(\mathbb{E}[\bm y\mid \bm s]-f(\bm s)\big).
\end{aligned}
\end{equation}
Expanding the squared norm and taking expectation, the cross term vanishes because
\begin{equation}
\mathbb{E}\!\left[\bm y-\mathbb{E}[\bm y\mid \bm s]\mid \bm s\right]=\bm 0.
\end{equation}
Hence
\begin{equation}
\begin{aligned}
&\mathbb{E}\!\left[\|\bm y-f(\bm s)\|_2^2\right]\\
&=\mathbb{E}\!\left[\|\bm y-\mathbb{E}[\bm y\mid \bm s]\|_2^2\right]\\
&\quad+\mathbb{E}\!\left[\|\mathbb{E}[\bm y\mid \bm s]-f(\bm s)\|_2^2\right],
\end{aligned}
\end{equation}
which is minimized by $f(\bm s)=\mathbb{E}[\bm y\mid \bm s]$.
\end{proof}

Now let $\bm w^\star(x)$ be the oracle routing rule and let $\mathcal{W}_{\mathrm{local}}=\{\bm w(x)=\psi(\bm s(x))\}$ be the class of measurable, simplex-valued branch-local rules. Strong convexity and constrained optimality imply the following bound, including when the oracle lies on the boundary of the simplex. For $0<t<1$, set $\bm w_t=(1-t)\bm w^\star+t\bm w$. Then
\begin{equation}
\begin{aligned}
\mathcal{L}_x(\bm w^\star)&\leq\mathcal{L}_x(\bm w_t)\\
&\leq(1-t)\mathcal{L}_x(\bm w^\star)+t\mathcal{L}_x(\bm w)\\
&\quad-\frac{\mu}{2}t(1-t)\|\bm w-\bm w^\star\|_2^2.
\end{aligned}
\end{equation}
Rearranging, dividing by $t$, and taking $t\downarrow0$ yields
\begin{equation}
\mathcal{L}_x(\bm w(x))-\mathcal{L}_x(\bm w^\star(x))
\ge
\frac{\mu}{2}\|\bm w(x)-\bm w^\star(x)\|_2^2.
\end{equation}
Taking expectation and infimum over $\mathcal{W}_{\mathrm{local}}$ yields
\begin{equation}
\begin{aligned}
&\inf_{\bm w\in\mathcal{W}_{\mathrm{local}}}
\mathbb{E}\!\left[\mathcal{L}_x(\bm w(x))\right]
-
\mathbb{E}\!\left[\mathcal{L}_x(\bm w^\star(x))\right] \\
&\ge
\frac{\mu}{2}
\inf_{\bm w\in\mathcal{W}_{\mathrm{local}}}
\mathbb{E}\!\left[
\|\bm w(x)-\bm w^\star(x)\|_2^2
\right].
\end{aligned}
\label{eq:appendix_router_step1}
\end{equation}

Every rule in $\mathcal{W}_{\mathrm{local}}$ is measurable with respect to $\bm s(x)$. Moreover, the conditional expectation of a simplex-valued random vector remains in the simplex, so the minimizer in the lemma is a feasible local rule. Hence
\begin{equation}
\inf_{\bm w\in\mathcal{W}_{\mathrm{local}}}
\mathbb{E}\!\left[
\|\bm w(x)-\bm w^\star(x)\|_2^2
\right]
=
\mathbb{E}\!\left[
\left\|
\bm w^\star(x)-\mathbb{E}[\bm w^\star(x)\mid \bm s(x)]
\right\|_2^2
\right].
\end{equation}
For the vector-valued oracle, the conditional variance in Theorem~\ref{thm:router_advantage} denotes the trace of its conditional covariance, equivalently $\mathbb{E}[\|\bm w^\star-\mathbb{E}[\bm w^\star\mid\bm s]\|_2^2\mid\bm s]$. Thus,
\begin{equation}
\begin{aligned}
&\inf_{\bm w\in\mathcal{W}_{\mathrm{local}}}
\mathbb{E}\!\left[\mathcal{L}_x(\bm w(x))\right]
-\mathbb{E}\!\left[\mathcal{L}_x(\bm w^\star(x))\right]\\
&\qquad\ge\frac{\mu}{2}\mathbb{E}\!\left[
\mathrm{Var}\!\left(\bm w^\star(x)\mid\bm s(x)\right)\right].
\end{aligned}
\end{equation}
Without an $\bm s$-measurable oracle version, the variance and excess-risk bound are positive. This conditional result does not guarantee oracle attainment by a learned router.

\end{document}